\documentclass{article}

\makeatletter
\renewcommand{\maketitle}{%
  \begingroup
    \renewcommand{\thefootnote}{\fnsymbol{footnote}}%
    \if@twocolumn
      \@topnewpage[\@maketitle]
    \else
      \newpage
      \global\@topnum\z@
      \@maketitle
    \fi
    \thispagestyle{empty}
    \@thanks
  \endgroup
  \setcounter{footnote}{0}%
}

\renewcommand{\@maketitle}{%
  \newpage
  \null
  \vskip 2em%
  \begin{center}%
    {\LARGE \@title \par}%
    \vskip 1em%
    {\large \lineskip .5em%
      \begin{tabular}[t]{c}%
        \@author
      \end{tabular}\par}%
    \vskip 1em%
    {\large \@date}%
  \end{center}%
  \vskip 2em%
}
\makeatother

\usepackage[utf8]{inputenc}
\usepackage[T1]{fontenc} 
\usepackage{xcolor}

\definecolor{midblue}{RGB}{0, 70, 140}

\usepackage{hyperref}
\hypersetup{
    colorlinks=true, 
    linkcolor=darkblue,
    citecolor=darkblue,
    urlcolor=darkblue,
    allcolors=midblue
}

\usepackage{arxiv}
\usepackage{url} 
\usepackage{booktabs} 
\usepackage{amsfonts} 
\usepackage{nicefrac} 
\usepackage{amsmath} 
\usepackage{microtype} 
\usepackage{cleveref}  
\usepackage{graphicx}
\usepackage{natbib}
\usepackage{doi}
\usepackage{threeparttable}

\usepackage{amssymb}
\usepackage{amsthm}

\usepackage{subcaption}

\usepackage{setspace}
\usepackage{tabularx}

\makeatletter
\providecommand{\floatnote}[1]{%
  \par\smallskip
  {\footnotesize\noindent\textit{Note:} #1\par}%
}
\makeatother

\usepackage{fancyhdr}
\usepackage[explicit]{titlesec}
\titleformat{\section}{\large\bfseries}{\thesection}{1em}{#1}

\usepackage{titling}

\usepackage{amsmath}
\usepackage{graphicx}
\usepackage{rotating}

\usepackage{pdflscape}
\definecolor{darkblue}{RGB}{0, 0, 100}
\definecolor{cbblue}{HTML}{0072B2}
\definecolor{cborange}{HTML}{E69F00}
\definecolor{midnightblue}{RGB}{25,25,112}

\usepackage{url}
\usepackage{booktabs}
\usepackage{amsfonts}
\usepackage{nicefrac}
\usepackage{amsmath}
\usepackage{microtype}

\usepackage{graphicx}
\usepackage{natbib}
\usepackage{threeparttable}

\usepackage{subcaption}

\usepackage{tabularx}

\usepackage[explicit]{titlesec}
\titleformat{\section}{\large\bfseries}{\thesection}{1em}{#1}

\usepackage{mathpazo}

\numberwithin{equation}{section}

\newtheorem{proposition}{Proposition}
\newtheorem{lemma}[proposition]{Lemma}

\usepackage{pgfplots}
\pgfplotsset{compat=1.18} 

\newif\ifcolorfigs
\colorfigstrue

\pretitle{\begin{center}\LARGE\bfseries}
\posttitle{\par\end{center}\vskip 0.5em}
\preauthor{\begin{center}\large}
\postauthor{\end{center}}
\predate{\begin{center}\small}
\postdate{\end{center}}

\usepackage{lineno}
\begin{document}

\title{AI Governance under Political Turnover: The Alignment Surface of Compliance Design}

\author{\href{https://orcid.org/0000-0002-0811-3515}{\includegraphics[scale=0.03]{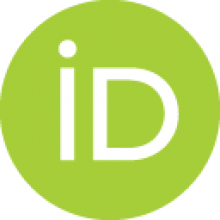}} \hspace{1mm} Andrew J. Peterson
 \thanks{Assistant Professor (Ma\^{i}tre de conf\'{e}rences), University of Poitiers. \href{mailto:andrew.peterson@univ-poitiers.fr}{andrew.peterson@univ-poitiers.fr}}}

\renewcommand{\shorttitle}{Government AI under Political Turnover}

\hypersetup{
pdftitle={AI Governance under Political Turnover: The Alignment Surface of Compliance Design}
pdfsubject={AI governance},
pdfauthor={Andrew J.~Peterson},
pdfkeywords={Artificial Intelligence, democratic fragility},
}

\date{\today}

\maketitle

\bigskip

\begin{abstract}
Governments are increasingly interested in using AI to make administrative decisions cheaper, more scalable, and more consistent. 
But for probabilistic AI to be incorporated into public administration it must be embedded in a compliance layer that makes decisions reviewable, repeatable, and legally defensible. 
That layer can improve oversight by making departures from law easier to detect. 
But it can also create a stable approval boundary that political successors learn to navigate while preserving the appearance of lawful administration. 
We develop a formal model in which institutions choose the scale of automation, the degree of codification, and safeguards on iterative use. 
The model shows when these systems become vulnerable to strategic use from within government, why reforms that initially improve oversight can later increase that vulnerability, and why expansions in AI use may be difficult to unwind. 
Making AI usable can thus make procedures easier for future governments to learn and exploit.
\end{abstract}

\section{Introduction}
\label{sec:intro}

Democracies govern through delegation, but delegation is always shadowed by the problem of control.
Executives, legislatures, and courts cannot directly supervise every administrative act, so democratic government depends on bureaucratic procedures such as rules, documentation, review, and contestation that make delegated action legible and challengeable \citep{mccubbins1987administrative,mccubbins1989structure,scott1998seeing,porter1995trust,power1997audit}.
That challenge is becoming sharper as many welfare states struggle under budget constraints, debt burdens, and frustration with bureaucratic red tape.
Because public services remain labor-intensive, cost growth is hard to avoid, which heightens political pressure for efficiency and increases electoral vulnerability to anti-establishment challengers.

Generative AI appears to offer governments a way to respond to these pressures. 
Large language models can interpret and produce text across cases, draft and personalize official communications, summarize records, translate policy into operational instructions, and even generate code to implement procedures. 
Unlike earlier administrative systems that were narrower or more deterministic, however, these models are probabilistic and can fail in opaque ways. 
For that reason, they are usable in rule-bound administration only insofar as their outputs can be checked against constitutions, statutes, and administrative procedure, and only insofar as those checks generate records that make departures traceable and contestable \citep{kroll2017accountable}. 
Governments have already begun to build such compliance layers through instruments such as impact assessments, transparency registers, logging requirements, and review procedures for public-sector AI \citep{tbs2021directiveadm,ukgds2025atrs,netherlandsalgoregister,eu2024aiact}.
The common aim is to make AI-assisted decisions legible within ordinary administrative oversight. 
This complicates a familiar logic of bureaucratic control, in which rules and procedures reduce drift by making violations visible and sanctionable \citep{huber2002deliberate}.
The same devices that make delegated action more legible to overseers can, after turnover, provide successor executives with a more usable pathway for steering administrative action without setting off tripwires.

We argue that the same reforms that make AI usable in public administration can also create a vulnerability after political turnover.
Contemporary democratic erosion often proceeds through formally lawful channels and avoids direct rupture with the constitutional order even as it weakens effective constraints \citep{bermeo2016backsliding,varol2015stealth,scheppele2018autocratic}.
When official decisions are evaluated through stable criteria, repeatable checks, and inherited administrative workflows, later political actors may be able to study and strategically probe those boundaries from within the state.
This reverses a familiar logic of bureaucratic control in which rules and procedures reduce drift by making violations visible and sanctionable \citep{huber2002deliberate}.
The central tension is that reforms meant to make administration more legible and accountable can also make strategic manipulation easier after power changes hands.

The mechanism is learnability.
Because legal and constitutional constraints are incomplete and contested, operational compliance must translate broad texts into implementable criteria \citep{kaplow1992rules,sunstein1995problems}.
To be defensible to overseers, that translation must yield stable judgments and reviewable records.
Those same features can create a learnable boundary between actions that clear review and actions that trigger concern.
We call that inherited pass-or-flag boundary the \emph{alignment surface}.\footnote{The term does not refer to alignment in the broad AI-safety sense, but to the practical review boundary generated by rules, tests, records, and approval criteria through which AI-mediated decisions are checked to ensure they are in line with law and procedure.} 
In institutional terms, the alignment surface is an inherited review technology: it is built to discipline current administration, but it may later serve successor principals as a reusable map of what passes and what triggers scrutiny.

We do not mean to claim that AI alone can ever generate a learnable administrative boundary. 
Standardized procedures of many kinds can do so. 
What is distinctive about probabilistic and generative AI is the combination of breadth and operationalization pressure. 
These systems can be deployed across heterogeneous administrative tasks, but their outputs are uncertain, non-categorical, and often difficult to defend on their own terms. 
To use them in rule-bound administration, states must therefore fit them within a score-to-action layer that translates model outputs into thresholds, exception rules, reason-giving requirements, and review artifacts. 
That layer is what makes AI administratively usable, but it also makes the approval boundary more stable, portable, and reusable across cases and domains. 
This institutional adaptation, undertaken to make AI governable, can give later insiders a more learnable compliance architecture to probe after political turnover.

We develop a formal model in which the design of oversight institutions is a political choice whose consequences become visible only after turnover occurs.
The baseline model has three stages.
First, safeguards are set upstream by statute, organizational design, and long-run investments.
Second, institutions adopt AI-assisted administrative procedures under modernization pressure.
Third, a successor executive inherits those institutions and may attempt to undermine them either directly or covertly.
The key design choices are how broadly the governed stack is deployed, how standardized and reusable the evaluation criteria become, and what access, review, and remedy institutions exist to constrain iterative probing.
Section~\ref{sec:results} then adds a short post-crisis repair extension that asks whether a temporary pressure episode is later unwound once ordinary conditions return.

The paper does not argue against auditability, standardization, or AI-mediated compliance as such. 
Its institutional claim is narrower: safeguards should be evaluated not only for how well they expose present-day violations, but also for how much they leave behind a reusable pathway for future insiders. 
Section~\ref{sec:discussion} returns to the design implications.

The model yields several institutional results. 
It identifies a threshold at which an inherited compliance architecture becomes politically exploitable after turnover. 
It then shows a codification dilemma: once blatant violations are deterred, reforms that jointly increase auditability and internal standardization can raise risk by making the system easier for bad-faith insiders to learn. 
Finally, it shows why modernization pressure can have lasting effects, because later democratic governments may inherit workflows that are difficult to unwind sufficiently to move back below that threshold.

We don't mean to suggest that governments should simply reject standardization or AI-mediated compliance outright.  
Procedural regularity and auditability are often necessary for legitimacy and for constraining discretion \citep{tyler1990obey,scott1998seeing}.
But we highlight a key democratic risk in AI-mediated bureaucratic governance: auditability may not suffice for democratic robustness. 
A compliance regime can be transparent yet dangerously legible for those seeking to do harm.
Designing institutions to be robust democratically requires preserving oversight visibility while reducing insider learnability.
Where leaders can develop strong priors about where codified bureaucracies can be exploited, turnover can convert the accountability infrastructure into a learnable boundary that enables procedurally clean erosion.

To make this mechanism concrete, consider a stylized case in which an executive seeks to quietly weaponize a regulatory agency against political opponents while preserving the appearance of legal compliance.
In a traditional bureaucracy, doing so at scale requires persuading many human officers to interpret rules selectively and to produce consistent justifications across cases.
That dependence on dispersed human judgment introduces friction, heterogeneity, and leak risk.
If the agency has instead modernized around a language-model-based review process, the successor inherits a far more obedient and legible compliance agent.
The system can process large volumes of cases, generate plausible justifications, and let officials revise submissions until they satisfy the criteria used in review.
What makes the architecture efficient for ordinary governance can therefore also make quiet, scalable within-form erosion easier after turnover.

The remainder of the paper proceeds as follows.
Section~\ref{sec:related} situates the argument in literatures on bureaucratic control, democratic backsliding and autocratic legalism, and algorithmic governance.
Section~\ref{sec:model} presents the model of adoption under turnover and formalizes overt versus within-form channels of democratic failure. 
Section~\ref{sec:results} derives the threshold condition under which the alignment surface becomes exploitable, the codification flip, and a pressure-and-persistence result showing how temporary modernization can create durable inherited vulnerability.
Section~\ref{sec:discussion} interprets the results in institutional terms and develops design implications and empirical signatures for AI-governed compliance systems.
Section~\ref{sec:conclusion} concludes.

\section{Related Literature}
\label{sec:related}


Our argument builds on work on political control of bureaucracy and the design of administrative procedures. 
Under political turnover, procedures chosen by one coalition become the inherited operating environment for successors with different objectives. 
Our argument therefore builds on scholarship on the political control of bureaucracy and the design of administrative procedures as instruments of commitment. 
In principal-agent accounts, elected principals choose statutes and procedures to reduce agency loss when monitoring is costly. 
Seminal work emphasizes that \emph{ex ante} process design can substitute for \emph{ex post} surveillance by structuring how decisions are made and how deviations are detected \citep{mccubbins1987administrative,mccubbins1989structure}. 
Related research shows that legislatures vary administrative procedures to balance political control against expertise and policy flexibility \citep{bawn1995political,epstein1999delegating,weingast1983bureaucratic}. 
We situate our argument in this tradition while taking seriously political turnover, where procedures chosen by one coalition become the inherited operating environment for a successor with different objectives. 
Seen this way, codification can simultaneously limit bureaucratic drift and reduce the informational cost of locating legally defensible routes to consolidation for a hostile successor. 
This framing aligns with work on endogenous policy expertise, but focuses on leaders' strategic search over inherited institutional constraints rather than on bureaucrats' information acquisition \citep{gailmard2007slackers}.

This tradition also connects to work on oversight regimes and the institutional design of monitoring.
Classic accounts distinguish direct ``police patrol'' oversight from ``fire alarm'' oversight that relies on third parties to detect problems and trigger intervention \citep{mccubbins1984oversight,lupia1994learning}.
A central implication is that oversight depends on information and verifiability, since principals can discipline agents only when deviations can be detected, documented, and credibly acted upon \citep{kiewiet1991logic,aberbach2001keeping}.
Administrative procedures therefore operate not only as constraints on agents, but as an informational infrastructure that determines who can observe violations, what counts as admissible proof, and how quickly remedies can be initiated. 
When complaint, appeal, and review mechanisms are standardized and documentation is routinized, fire-alarm oversight lowers verification costs for outsiders \citep{mccubbins1984oversight,lupia1994learning}. 

Bureaucratic discretion is another key dimension. 
\citet{lipsky1980street} emphasizes that street-level officials effectively make policy through discretionary judgments under resource constraints. 
Discretion is structured by organizational routines, monitoring, and incentives, and that formal rules often reshape discretion rather than eliminate it \citep{wilson1989bureaucracy,brehm1997working,maynardmoody2003cops}.
As a result, reform agendas that seek uniformity and procedural regularity trade off two kinds of accountability: they can limit idiosyncratic variation and make decisions easier to justify, but they can also relocate judgment into upstream rule design, data definitions, and the administration of exceptions.
This explains our focus on standardization as an institutional choice that can strengthen oversight while also changing what is legible, predictable, and contestable.

Our emphasis on standardization also connects to work on legibility, quantification, and the unintended consequences of performance measurement.
Scott argues that modern administration depends on legibility, the simplification of complex social realities into standardized categories that make governance possible at scale \citep{scott1998seeing}.
Research on quantification and auditing shows that demands for objectivity often produce systems that are easier to verify formally than to evaluate substantively \citep{porter1995trust,power1997audit}.
A central implication is reactivity, since public measures change incentives and induce adaptation to what is counted and compared \citep{espeland2007rankings}.
In public management, target regimes and metric-based accountability often produce goal displacement and strategic compliance, including gaming that satisfies the evaluation rule while defeating its purpose \citep{bevan2006measured,hood2006gaming,strathern1997improving}.
Taken together, these literatures show that administrative reforms designed to increase legibility and reviewability can also make the criteria of acceptable action more explicit and predictable.

In the literature on democratic backsliding, our mechanism provides a micro-level account of how leaders erode checks while preserving legal form.
Work on autocratic legalism, abusive constitutionalism, and constitutional retrogression emphasizes that contemporary authoritarian consolidation often proceeds through formally lawful changes and strategic use of existing rules \citep{scheppele2018autocratic, corrales2015authoritarian, huq2018lose, landau2013abusive}.
Comparative research likewise highlights executive aggrandizement and other incremental tactics that avoid the overt signals of a coup \citep{bermeo2016backsliding, levitsky2019democracies, varol2015stealth}.
This scholarship richly documents the repertoire of legal maneuvers, but it less often theorizes how procedural precision and administrative standardization shape the ease with which tactics can be identified, justified, and replicated inside the state.
Our contribution is to foreground this institutional margin by treating within-form abuse as a problem of locating and scaling legally plausible actions within an inherited compliance architecture.

This emphasis also complements theories of gradual institutional change that focus on drift, conversion, and reinterpretation when enforcement and contestation are uneven \citep{mahoney2010institutional_change}.
Where this literature stresses how ambiguity creates room for repurposing, we highlight a parallel channel in which codified procedures can make repurposing more operational by reducing interpretive variance across offices and time.
Legal scholarship on rules versus standards makes this duality explicit, since more precise rules can limit discretionary drift while also enabling strategic behavior by actors who can anticipate how compliance will be evaluated \citep{kaplow1992rules,diver1983optimal,schauer1991playing}. 
A similar point appears in accounts of ``constitutional hardball'', which emphasize that formally lawful moves can nonetheless undermine the norms that sustain democratic competition \citep{tushnet2004hardball}. 
Taken together, these literatures suggest that rules can both constrain and enable strategic behavior, but they leave open how procedural precision and administrative standardization shape the cost of identifying and scaling within-form consolidation under turnover. 
We take up that question by focusing on how standardized, machine-operational compliance routines change what is predictable and repeatable for a successor operating inside existing legal constraints.

These dynamics also connect to a growing literature on algorithmic governance and system-level bureaucracy.
\citet{bovens2002system} and \citet{yeung2018algorithmic} emphasize that automated decision systems shift governance toward rule-based, machine-operational procedures whose legitimacy depends on their explainability, auditability, and contestability.
Work on data-driven administration highlights both the welfare-state consequences of automated classification and the new accountability demands created by black-boxed systems \citep{eubanks2018automating,pasquale2015black}.
Legal scholarship similarly emphasizes due process and transparency concerns in automated decision-making and the institutional requirements for accountable systems \citep{citron2008due,kroll2017accountable}.
We build on these debates but emphasize a political risk that is easy to understate when the focus is only on normal-time error or bias: standardized compliance can be appropriated and operationalized by a successor who seeks procedurally clean consolidation.

Recent work suggests that generative AI is shifting ``algorithmic governance'' from discrete classification tasks toward language-mediated workflow integration: drafting and summarizing case records, generating routinized justifications, and supporting frontline and back-office processing at scale \citep{LorenzPersetBerryhill2023GenAI,OECD2024GoverningWithAI}.
Early evidence indicates that uptake is often bottom-up and unevenly governed inside public organizations, prompting rapid proliferation of internal guidance and risk-management practices rather than stable statutory settlement \citep{BrightEtAl2024GenAIWidespread,Weerts2025GenAIinPA,UKHMG2024GenAIFramework}.
As these systems are connected to administrative systems of record, they increasingly resemble ``agentic'' tools that can plan and execute chained actions within institutional constraints, raising oversight problems that require continuous monitoring, operational visibility, and bounded authority \citep{OECD2026AgenticLandscape,IMDA2026MGFAgenticAI,SchmitzRystromBatzner2025AgentOversight}.
For our purposes, the key implication is institutional: embedding generative or agentic AI in governance typically entails formalizing inputs, exceptions, and review protocols, thereby deepening procedural codification and expanding the set of predictable, repeatable moves available to actors who inherit the compliance architecture under turnover.

Finally, we draw on concepts from the technical literature on AI safety to clarify an analogous vulnerability in institutional design. 
In machine learning systems, ``specification gaming'' and ``reward hacking'' describe cases where an optimizing agent attains the formal metric while violating the designer's substantive intent \citep{amodei2016concrete,pan2022effects,skalse2022defining}. 
Related work emphasizes that written objectives are partial signals about underlying aims and can be misread when contexts shift \citep{hadfieldmenell2017inverse,russell2019human}.
A similar logic applies to administrative governance when compliance is defined through standardized, machine-operational procedures that can be satisfied strategically without honoring the normative purpose of the rules \citep{bovens2002system,yeung2018algorithmic}. 
AI-enabled administration may improve logging and traceability, but these records do not by themselves prevent actors from pursuing formally compliant paths that remain substantively abusive. 
This perspective reframes digitized compliance not only as an accountability tool, but also as a standardized interface that can be probed and exploited by political actors with operational access. 
The broader implication is that democratic safeguards should anticipate adversarial adaptation in the same way that safety engineering anticipates failures in complex systems \citep{perrow1984normal}. 

\section{Model}
\label{sec:model}



We model how institutions respond to pressure for more efficient administration when the administrative process they build may later be inherited by a successor with different aims.
Standardized oversight procedures, including rules, documentation, and logs, can deter overt violations by making departures easier to detect and unwind.
But the same standardization can also make the review process predictable enough for insiders to study and exploit, especially where legal constraints are contested and compliance must be operationalized through stable criteria.

The model has three stages.
At $t=0$, slow-moving safeguards are set by statute, organizational design, and long-run investment.
At $t=1$, modernization pressure is realized and institutions choose an adoption architecture, taking those safeguards as given.
At $t=2$, a successor executive inherits and uses the resulting administrative process.
The successor is democratic with probability $1-\delta$ and autocratic with probability $\delta\in(0,1)$.
A democratic executive maximizes public value subject to inherited constraints.
An autocratic executive trades off erosive action against detection, contestation, and reversal under the inherited enforcement institutions.


We begin from the premise that the constitution, law and procedure do not fully specify, ex ante, how every executive action should be classified.
Let $\mathcal{A}$ denote the space of executive actions, partitioned into clearly permissible actions $\mathcal{P}$, clearly impermissible actions $\mathcal{I}$, and an ambiguity set $\mathcal{U}$ of contested cases:
\[
\mathcal{A}=\mathcal{P}\sqcup\mathcal{U}\sqcup\mathcal{I}.
\]
An operational compliance procedure maps actions to a binary administrative judgment,
\[
R_s(a)\in\{\text{Pass},\text{Flag}\},\qquad a\in\mathcal{A}.
\]
Pass means the action clears administrative review and can proceed.
Flag means it triggers delay, revision, or blocking.
This is an administrative screening judgment, not a final legal adjudication.
Taken together, these screening judgments define the alignment surface, the practical boundary between actions that pass review and actions that trigger challenge.
The model therefore does not assume that codification makes law complete.
It assumes only that the review process is stable enough to be used repeatedly, and that this stability has strategic consequences under turnover.


At $t=1$, institutions choose an adoption architecture summarized by $(x,s)$.
In administrative terms, the design variables have a straightforward interpretation. 
Scale $x$ is the administrative footprint of AI-mediated review. Higher $x$ means broader domain coverage and deeper operational reliance on that process.
Codification $s$ is the degree to which that review process becomes standardized, portable, and reproducible across cases.
Safeguards $r$ are the institutional complements that limit throughput, separate review from deployment, and speed contestation and remedy.

Codification $s\in[0,1]$ captures the intensity with which probabilistic review outputs are translated into standardized, repeatable, and recorded administrative treatment.
For an action $a\in\mathcal{A}$, the underlying system may produce a probabilistic compliance assessment rather than a categorical judgment.
The state does not act on that assessment directly.
Instead it operationalizes it through thresholds, review rules, and documentation requirements that generate the binary screening judgment $R_s(a)\in\{\text{Pass},\text{Flag}\}$.
Higher codification means narrower uncertainty bands, more uniform reason-giving, and more stable Pass/Flag determinations across time, reviewers, and domains.
As a result, feedback from review becomes more predictive across attempts and settings, which makes iterative adjustment more effective for actors with access.\footnote{For parsimony, the main text uses a single codification index $s$.  Appendix \ref{app:codification_split} separates oversight-facing auditability ($s_{\mathrm{aud}}$) from insider-facing standardization ($s_{\mathrm{std}}$); the main text's double-edged pattern concerns bundled reforms that increase both. Online Appendix \ref{app:alt_microfoundation} records an ambiguity-reduction variant.}

Safeguards, meanwhile, are slow-moving institutional complements intended to prevent abuse. 
We allow three components of safeguards, $r\equiv(r_m,r_\kappa,r_q)\in\mathbb{R}_+^3$.
The component $r_m$ captures separation of duties and access controls that limit how many variants political principals can move through the process.
The component $r_\kappa$ captures separation between evaluation and operational control, reducing how easily review feedback can be converted into deployable strategy.
The component $r_q$ captures contestation and remedy capacity, reducing the persistence of harmful actions that pass review.
Because these safeguards are embedded in formal authority, staffing, and organizational routine, we treat $r$ as inherited at both adoption and turnover.


Scaling such a workflow also creates a review-capacity constraint.
Normalize total case volume to one.
Let $m$ denote the width of the uncertainty band in the score-to-action mapping, let $q(m)$ be the share of routed cases sent to additional review with $q'(m)>0$, and let $\bar H$ denote manual-review capacity.
If a fraction $x$ of official activity is routed through the workflow, feasibility requires
\[
x q(m)\le \bar H.
\]
As scale rises, institutions cannot leave too much of the score space to case-by-case review without overwhelming capacity.
Expanding deployment therefore requires narrower uncertainty bands, more standardized treatment of recurring cases, or equivalent investments in codification.
We summarize this requirement with the feasibility constraint
\[
s\ge S(x),
\]
where $S:[0,\bar x]\to[0,1]$ is continuous and weakly increasing with $S(0)=0$ and $S(\bar x)=1$.
Because codification requires up-front coordination and ongoing maintenance, institutions that expand scale will often choose only the minimum level needed to keep the process functioning, so $s$ is frequently close to $S(x)$.\footnote{Online Appendix Lemma \ref{lem:binding_codification} provides a sufficient condition under which the adoption-stage optimum binds at $s=S(x)$.}


The outcome of interest is democratic failure when an autocrat comes to power.
By failure we mean the erosion of binding constraints that would otherwise make executive action contestable and reversible.
In such a state, elections may remain formally in place, but oversight institutions no longer reliably invalidate, deter, or remedy executive action.

Failure under autocratic turnover has two channels.
The first is overt abuse: even when actions are clearly impermissible, enforcement may be imperfect.
Let $F_0(x,s,r)\in[0,1]$ denote the probability that overtly impermissible actions succeed in leading to democratic failure when an autocrat holds office.
We assume
\[
F_{0,s}(x,s,r)\le 0,\qquad F_{0,r_j}(x,s,r)\le 0\ \text{for each }j\in\{m,\kappa,q\},
\]
reflecting that codification and safeguards can improve detectability, contestation, and remedy. 
This monotonicity assumption concerns the oversight-facing side of codification (auditability and traceability), not the insider-facing learnability component separated in 
Appendix \ref{app:codification_split}. 
We do not sign-restrict $F_{0,x}$: scale can expand opportunities for overt abuse, but it can also increase traceability and oversight capacity in some settings.
As shown in Online Appendix \ref{app:proofs}, total fragility can still rise with scale even when $F_{0,x}<0$ locally, including along the binding regime $s=S(x)$.

The second channel is within-form erosion, where the autocrat attempts to subvert the system surreptitiously. 
The ambiguity set $\mathcal{U}$ allows an autocratic executive to pursue plausibly defensible actions that nonetheless shift power in ways that weaken constraints.
When the review process is stable and feedback is available, the executive can engage in legal iteration: draft actions, observe Pass/Flag signals from internal review, courts, or oversight, revise form or sequencing, and deploy those that pass.
In this channel, the key question is not only whether an action can pass once, but whether it can remain in place long enough to produce meaningful political effects. 
Contestation and remedy capacity therefore operate mainly by shortening the expected duration of harmful passing actions, rather than by changing the initial Pass/Flag decision.\footnote{Online Appendix \ref{app:wf_microfoundation} makes this precise by modeling within-form effectiveness as passage plus persistence, with $r_q$ increasing the rate at which contested actions are reversed.}

We use the term `aligned' in the sense of oversight alignment, meaning correspondence between operational judgments and the constraints that overseers would enforce.
The risk analyzed below is that even when the operational gate is aligned, a successor can strategically navigate within the passing subset of the ambiguity set $\mathcal{U}$ so that procedurally compliant actions cumulatively weaken democratic constraints.%


To model the within-form channel, we begin with a minimal search microfoundation and then introduce a tractable rare-events approximation.
Let $N(x)\in\mathbb{R}_+$ denote the number of routinized decision interfaces exposed by scale $x$, with $N'(x)\ge 0$ and $N(0)=0$.
At each interface, an autocrat can attempt up to $M(r_m)\in\mathbb{R}_+$ variants within the consolidation window, where $M'(r_m)\le 0$.
For a single attempt, let $\rho(s,r_\kappa)\in(0,1)$ denote the probability that a proposed variant both passes the operational gate and has high erosion potential, with $\partial \rho/\partial s\ge 0$ and $\partial \rho/\partial r_\kappa\le 0$.
Let $\psi(r_q)\in(0,1)$ denote the probability that a passing erosive move persists long enough to matter, with $\psi'(r_q)\le 0$.

The per-interface probability of at least one effective erosive move is
\begin{equation}
\pi(s;r)\equiv 1-\big(1-\rho(s,r_\kappa)\psi(r_q)\big)^{M(r_m)}.
\label{eq:pi_def}
\end{equation}
Assuming conditional independence across interfaces, the probability of success somewhere in the stack is
\begin{equation}
p_{\mathrm{wf}}^{\mathrm{search}}(x,s;r)=1-\big(1-\pi(s;r)\big)^{N(x)}.
\label{eq:wf_prob_search}
\end{equation}
Define $\nu(s;r)\equiv -\ln\!\big(1-\pi(s;r)\big)\ge 0$.
Then
\[
p_{\mathrm{wf}}^{\mathrm{search}}(x,s;r)=1-\exp\!\big(-N(x)\nu(s;r)\big).
\]

To allow for vulnerability that does not operate through the standardized stack, we add an independent baseline intensity $\mu_0(r)\ge 0$.
The total search-based intensity is then:
\[
\mu^{\mathrm{search}}(x,s,r)\equiv \mu_0(r)+N(x)\nu(s;r),
\]
so that
\[
p_{\mathrm{wf}}^{\mathrm{search}}(x,s;r)=1-\exp\!\big(-\mu^{\mathrm{search}}(x,s,r)\big).
\]


For the main results, we use a benchmark specialization of this search-based intensity. 
We summarize within-form vulnerability by
\begin{equation}
\mu(x,s,r)\equiv \mu_0(r)+\eta(r)\,x\,s,
\label{eq:mu_def}
\end{equation}
where $\mu_0(r)\ge 0$ captures baseline vulnerability that exists even without a fully standardized stack, and $\eta(r)xs$ captures the additional vulnerability created when scale operates through a stable Pass/Flag gate. 
Under this benchmark, the within-form success probability is
\begin{equation}
p_{\mathrm{wf}}(x,s;r)=1-\exp\!\big(-\mu(x,s,r)\big).
\label{eq:wf_prob_main}
\end{equation}
An effective move may be either a single action or a short sequence of procedurally passing actions that, if they persist, produce durable institutional erosion.
Online Appendix \ref{app:wf_microfoundation} gives the microfoundations and approximation steps connecting the interface-level search environment to this Poisson benchmark, including a threshold variant in which erosion requires at least $k\ge 1$ effective moves within the window.
The alignment-surface logic continues to hold in that setting, with the Poisson cutoff $-\ln(1-\bar p)$ replaced by the corresponding $k$-move cutoff.

Total regime-relevant democratic failure under an autocrat occurs if either overt abuse succeeds or within-form erosion succeeds.
The resulting failure probability is
\begin{equation}
F(x,s,r)=p_{\mathrm{wf}}(x,s;r)+\big(1-p_{\mathrm{wf}}(x,s;r)\big)F_0(x,s,r)
=1-\big(1-F_0(x,s,r)\big)\exp\!\big(-\mu(x,s,r)\big).
\label{eq:failure_decomp_main}
\end{equation}
This reduced-form aggregation isolates the architectural tradeoff between administrative capacity and fragility under turnover.
Online Appendix \ref{app:alt_aggregation} reports a choice-based alternative with strategic substitution across channels.

Equation \eqref{eq:failure_decomp_main} shows why procedural modernization can be double-edged under turnover.
Codification can reduce overt abuse by improving auditability and traceability, captured by $F_{0,s}\le 0$.
But when it also makes the Pass/Flag gate more reproducible across settings, it can increase within-form vulnerability by making iterative adaptation more effective, as reflected in $\mu(x,s,r)$.
The net effect of codification on failure risk is therefore not signed in the main model.
It depends on how a reform package shifts oversight-facing auditability relative to insiders' ability to adapt to review feedback.\footnote{Online Appendix \ref{app:codification_split} makes this explicit by separating auditability ($s_{\mathrm{aud}}$) from insider-facing standardization ($s_{\mathrm{std}}$): raising $s_{\mathrm{aud}}$ is unambiguously protective, while raising $s_{\mathrm{std}}$ increases within-form risk when $x>0$.} 
If within-form erosion weakens oversight institutions or courts (for example through personnel capture or jurisdictional changes), then the channels become complementary and the separable benchmark is conservative for the paper's cautionary claims.


We now state the adoption-stage objective, which trades off administrative capacity against turnover risk through $F(x,s,r)$.
Let $G(x,s)$ denote the public value generated by the standardized administrative process in ordinary governance, with $G_x>0$ and $G_s\ge 0$.
Let $\lambda>0$ denote modernization pressure, interpreted as the marginal value of additional administrative capacity.
Let $C(x,s)$ denote implementation and operating costs, with $C_x>0$ and $C_s>0$.
Democratic failure carries a loss $\Omega>0$.
For transparency, we take expected loss to be proportional to the failure probability $F$; the results extend immediately to any smooth increasing loss function of $F$.

At $t=0$, safeguards can be chosen or reformed at convex cost $B(r)$, with $B(0)=0$, $B_{r_j}>0$, and convexity reflecting the political and organizational frictions involved in strengthening safeguards.
The results below focus on the adoption-stage choice conditional on inherited $r$.
We treat the adoption-stage objective as a reduced-form representation of good-faith institutional choice.
The actor labeled ``institutions'' can be read as a legislature, an agency, or a coalition that internalizes at least part of the downstream cost of democratic breakdown.
This representation is most plausible when current decision-makers expect future political competition and face legal, electoral, or reputational costs from institutional failure.
Under polarization, short horizons, or incumbents expecting to benefit from within-form advantages, threshold crossing and incomplete post-crisis unwinding become more likely, and safeguards are less likely to keep pace.

The capacity term $\lambda G(x,s)-C(x,s)$ is not multiplied by $(1-\delta)$ because it captures benefits that are realized and valued in routine governance regardless of who later holds office (service delivery, legitimacy, and administrative performance),
while the turnover-sensitive term is the expected democratic survival loss $\delta\Omega F(x,s,r)$.

The adoption-stage objective is
\begin{equation}
U_1(x,s;r,\lambda)=\lambda\,G(x,s)-C(x,s)-\delta\,\Omega\,F(x,s,r),
\label{eq:adoption_objective_main}
\end{equation}
subject to $x\in[0,\bar x]$, $s\in[0,1]$, and $s\ge S(x)$.

\section{Results}
\label{sec:results}

Democratic delegation is easiest to control when official action runs through rules, tests, and records.
Those features make departures from law or procedure easier to detect, challenge, and reverse.
Under turnover, however, the same administrative process can also become subject to analysis and exploitation by the new government. 
This section develops three linked results.
The first shows how scale, codification, and safeguards shape the probability that an autocratic successor can find a politically useful package that still passes review.
The second shows how additional codification can shift from constraining abuse to making insider iteration easier. 
The third shows that shifts induced by temporary modernization pressure need not reverse once that pressure subsides.
Such temporary pressure can push institutions across the within-form threshold, and later democratic governments may repair only part of the inherited workflow.

The central dynamics lie in the within-form channel of abuse. 
Here, a successor government inherits an approval process and iterates against it in the ambiguous region $\mathcal{U}$. 
Scale expands the number of routinized decisions and review points where such probing can occur. 
Codification makes pass-or-flag feedback more stable and more reusable across attempts and domains. 
Safeguards reduce that iterative capacity by limiting unilateral throughput ($r_m$), weakening test-to-deploy transferability ($r_\kappa$), and shortening the window for exploitation through faster contestation and remedy ($r_q$). 
The first result formalizes this logic and characterizes the surface available for the leader to attack, a surface created by the same effort to keep AI-mediated administration legible to review.

\begin{proposition}[Scale, safeguards, and the alignment surface]
\label{prop:alignment_surface_scale}

Let within-form success be
\[
p_{\mathrm{wf}}(x,s;r)=1-\exp\!\big(-\mu^{\mathrm{search}}(x,s,r)\big),
\qquad
\mu^{\mathrm{search}}(x,s,r)\equiv \mu_0(r)+N(x)\nu(s;r),
\]
where $N'(x)\ge 0$, $\nu_s(s;r)\ge 0$, and safeguards weakly reduce within-form intensity in the sense that
$\mu_{0,r_j}(r)\le 0$ and $\nu_{r_j}(s;r)\le 0$ for each safeguard component $r_j$.

Then $p_{\mathrm{wf}}(x,s;r)$ is weakly increasing in scale $x$ and codification $s$, and weakly decreasing in each safeguard component $r_j$.
\end{proposition}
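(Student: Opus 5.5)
The argument reduces everything to monotonicity of the intensity $\mu^{\mathrm{search}}$. The map $\mu\mapsto 1-\exp(-\mu)$ is strictly increasing on $\mathbb{R}_+$, with derivative $\exp(-\mu)>0$. So $p_{\mathrm{wf}}$ inherits the weak monotonicity of $\mu^{\mathrm{search}}$ in each argument. The plan is to sign the partial derivatives of $\mu^{\mathrm{search}}(x,s,r)=\mu_0(r)+N(x)\nu(s;r)$ and compose with the outer map via the chain rule:
\[
\frac{\partial p_{\mathrm{wf}}}{\partial z}=\exp\!\big(-\mu^{\mathrm{search}}\big)\,\frac{\partial \mu^{\mathrm{search}}}{\partial z},\qquad z\in\{x,s,r_m,r_\kappa,r_q\}.
\]
If differentiability is not wanted, the same argument works with finite differences, since a composition of weakly monotone maps with a strictly increasing map preserves the direction of monotonicity.

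The three signs follow directly from the hypotheses.

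\emph{Scale.} We have $\partial_x\mu^{\mathrm{search}}=N'(x)\,\nu(s;r)$. Here $N'(x)\ge 0$ by assumption. Also $\nu(s;r)=-\ln(1-\pi(s;r))\ge 0$, because $\pi\in[0,1)$: from \eqref{eq:pi_def}, $\rho\psi\in(0,1)$, so $1-\pi=(1-\rho\psi)^{M}\in(0,1]$. This nonnegativity of $\nu$ is the one point not stated as a hypothesis, and it must be taken from the construction of $\nu$, so it should be checked explicitly.

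\emph{Codification.} We have $\partial_s\mu^{\mathrm{search}}=N(x)\,\nu_s(s;r)\ge 0$, using $N(x)\ge 0$ (since $N$ maps into $\mathbb{R}_+$) and $\nu_s\ge 0$.

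\emph{Safeguards.} We have $\partial_{r_j}\mu^{\mathrm{search}}=\mu_{0,r_j}(r)+N(x)\,\nu_{r_j}(s;r)\le 0$, as a sum of a nonpositive term and a nonnegative multiple of a nonpositive term.

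Multiplying each of these by $\exp(-\mu^{\mathrm{search}})>0$ gives the claimed signs for $p_{\mathrm{wf}}$.

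I expect no genuine obstacle; the only delicate points are bookkeeping. First, $\nu\ge 0$ and $N\ge 0$ are needed to sign the cross terms, and I will cite the definitions preceding \eqref{eq:wf_prob_search} for both. Second, the hypotheses $\nu_s\ge 0$ and $\nu_{r_j}\le 0$ are assumed directly in the statement, though they are also consistent with the microfoundation. For instance, $\nu=-M(r_m)\ln(1-\rho\psi)$ is increasing in $\rho$, hence weakly increasing in $s$ and decreasing in $r_\kappa$. It is decreasing in $r_q$ through $\psi$, and decreasing in $r_m$ through $M$. I would note this in a remark rather than rely on it. Finally, I will remark that the benchmark \eqref{eq:mu_def} is the special case $N(x)\nu(s;r)=\eta(r)xs$ with $\eta\ge 0$ and $\eta_{r_j}\le 0$, so the conclusion carries over to \eqref{eq:wf_prob_main}.
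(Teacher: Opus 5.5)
Your proposal is correct and follows essentially the same route as the paper's proof in Appendix~\ref{app:proofs}: differentiate $p_{\mathrm{wf}}=1-\exp(-\mu^{\mathrm{search}})$ by the chain rule, then sign $N'(x)\nu(s;r)$, $N(x)\nu_s(s;r)$, and $\mu_{0,r_j}(r)+N(x)\nu_{r_j}(s;r)$ using $N\ge 0$ and $\nu\ge 0$. The paper invokes $\nu\ge 0$ only as a condition in its proof, so your explicit derivation of it from $1-\pi=(1-\rho\psi)^{M}\in(0,1]$ is a small but worthwhile tightening.
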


For any chosen concern level $\bar p\in(0,1)$, the condition $p_{\mathrm{wf}}(x,s;r)\le \bar p$ is equivalent to a single-index bound on within-form intensity:
\begin{equation}
\mu_0(r)+N(x)\nu(s;r) \le -\ln(1-\bar p).
\label{eq:alignment_surface_condition}
\end{equation}
Again, the alignment surface is the inherited pass-or-flag boundary created by the review process.
This equation gives the threshold condition under which the inherited boundary becomes politically dangerous by making within-form success sufficiently likely to matter.
Below the threshold, a successor government is unlikely to find a passing erosive package before contestation catches up.
Above it, routinized decisions and stable review signals make successful probing more likely.
Safeguards push that threshold outward by reducing baseline exposure $\mu_0(r)$ and by making repeated insider iteration less effective through $\nu(s;r)$.

For the remainder of this section, we specialize the general search formulation to the Poisson benchmark introduced in Section \ref{sec:model}.
In that benchmark, within-form success takes the exponential-search form
\[
p_{\mathrm{wf}}(x,s;r)=1-\exp\!\big(-\mu(x,s,r)\big),
\]
and within-form intensity is linear in the interaction between scale and codification:
\[
\mu(x,s,r)=\mu_0(r)+\eta(r)xs,
\]
with $\eta(r)>0$.
Here $\mu_0(r)$ captures within-form vulnerability that exists even without a fully standardized process, while $\eta(r)xs$ captures the additional vulnerability created when decisions are made at scale through stable criteria that officials can learn to navigate.

Under this benchmark, the alignment-surface condition has an immediate threshold interpretation.
For any target $\bar p\in(0,1)$ and any $(r,s)$ with $\eta(r)s>0$, there is a critical scale
\begin{equation}
x^{\mathrm{crit}}(\bar p; r,s)\equiv \max\!\left\{0,\frac{-\ln(1-\bar p)-\mu_0(r)}{\eta(r)s}\right\},
\label{eq:xcrit_scale}
\end{equation}
above which within-form success exceeds the threshold: $x\ge x^{\mathrm{crit}}(\bar p;r,s)\Rightarrow p_{\mathrm{wf}}(x,s;r)\ge \bar p$.
Substantively, $x^{\mathrm{crit}}$ marks the point at which routinized administration gives a successor enough opportunities to test the process, and stable enough feedback to learn from those attempts, that finding at least one procedurally passing erosive package becomes likely.

The implication is straightforward.
As scale rises, a successor has more opportunities to test inherited procedures, and codification makes the feedback from those tests easier to reuse. 
In the benchmark, that logic is captured by the interaction term $\eta(r)xs$. 
(Online Appendix~\ref{app:proofs} derives the comparative static of total failure with respect to scale while holding codification fixed in \eqref{eq:dFdx}, and gives a sufficient condition in \eqref{eq:scale_condition}. )
Because scaling in practice usually requires additional standardization to keep administration legible and auditable, we also impose the feasibility requirement $s\ge S(x)$ and report the induced change in failure along the binding path $s=S(x)$, including the total derivative \eqref{eq:dFdx_binding} and sufficient condition \eqref{eq:scale_condition_binding}.

Figure \ref{fig:core_threshold_crossing_main} summarizes the geometry of the first and third results.
Rising modernization pressure increases adopted scale, feasibility then pushes codification upward along the binding path $s=S(x)$, and the polity becomes exposed once that path crosses the within-form vulnerability threshold.

\begin{figure}[t] 
\centering
\includegraphics[width=0.74\textwidth]{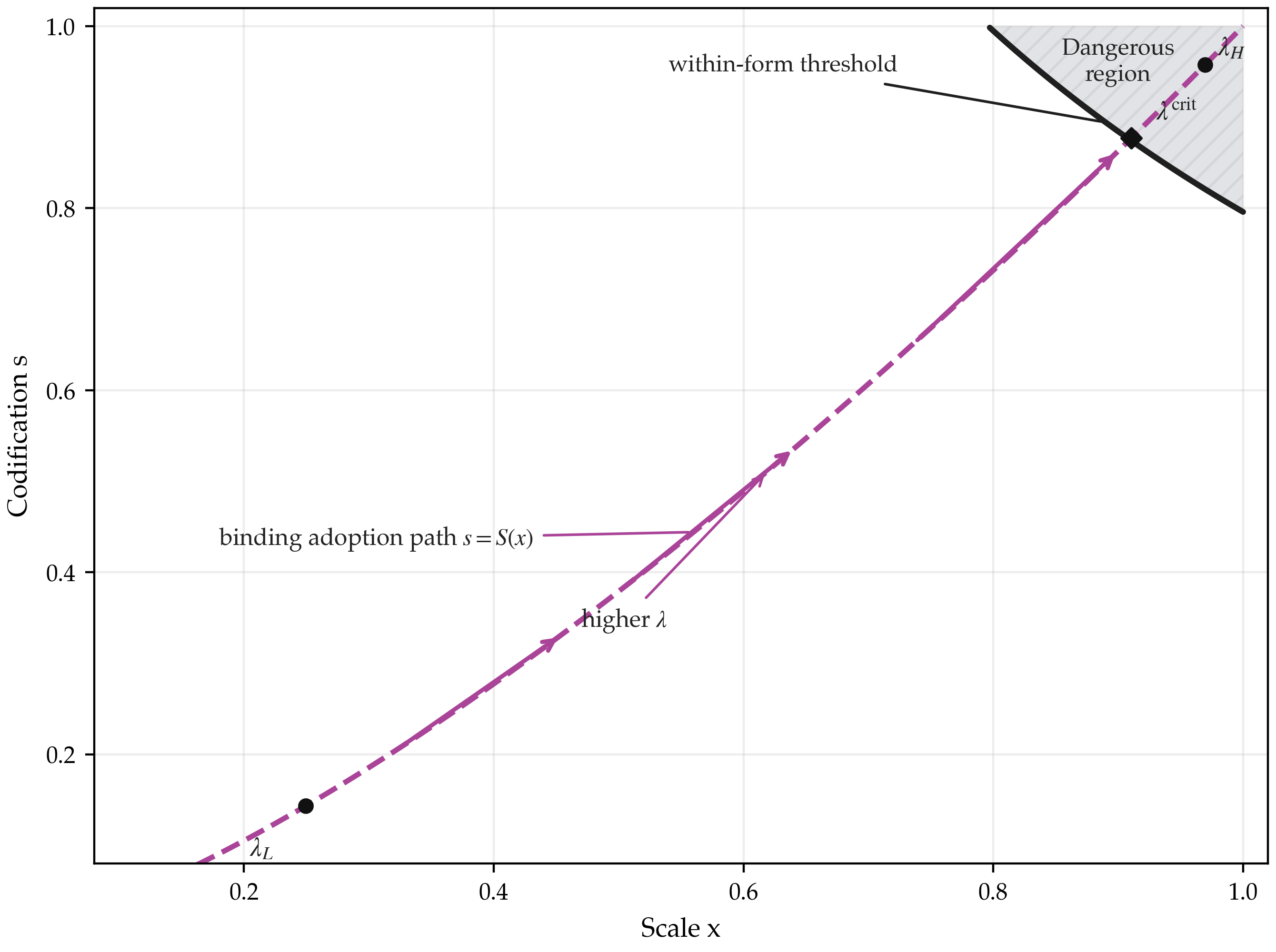}
\caption{Modernization pressure and threshold crossing along the binding adoption path.}
\label{fig:core_threshold_crossing_main}
\floatnote{The solid curve marks the threshold $p_{\mathrm{wf}}(x,s;r)=\bar p$ in $(x,s)$-space for a fixed safeguard bundle $r$.
The dashed curve is the binding adoption path $s=S(x)$.
As modernization pressure $\lambda$ rises, the adopted architecture moves along that path from $\lambda_L$ to $\lambda_H$.
When $\lambda_L<\lambda^{\mathrm{crit}}\le \lambda_H$, the path crosses the threshold at $\lambda^{\mathrm{crit}}$ and enters the shaded region, where within-form abuse becomes politically consequential.
Schematic illustration of Proposition \ref{prop:alignment_surface_scale} and Proposition \ref{prop:scale_increases_lambda}.}
\end{figure}

Next we compare how codification changes the relative importance of the overt and within-form channels of abuse.
Conditional on an autocratic successor, democratic failure can occur either through overtly impermissible abuse, with probability $F_0(x,s,r)$, or through within-form erosion, with probability $p_{\mathrm{wf}}(x,s;r)$.
Under the separable aggregation in Section \ref{sec:model}, total failure is
\[
F(x,s,r)=1-\big(1-F_0(x,s,r)\big)\exp\!\big(-\mu(x,s,r)\big),
\]
so codification affects $F$ through two margins that pull in opposite directions. 
Higher $s$ tends to reduce overt vulnerability $F_0(x,s,r)$ by making departures easier to detect and contest. 
But higher $s$ can also increase within-form vulnerability by making the compliance boundary more stable and lessons from repeated attempts more transferable.
The first margin has diminishing returns.
Early codification removes blatant violations and makes meaningful review possible.
Once procedures are already routinized, further codification yields smaller deterrent gains.

\begin{proposition}[Codification and the shift from constraint to exploitability]
\label{prop:codification_flip_main}

Fix $(x,r)$ and work under the Poisson benchmark.
Suppose $F_{0,s}(x,s,r)\le 0$ and $F_{0,ss}(x,s,r)\ge 0$ on $s\in[0,1]$.
Then the marginal effect of codification on total failure changes sign at most once on $[0,1]$.
If $h(0)<0<h(1)$, there exists at least one cutoff $s^{\mathrm{flip}}(x,r)\in(0,1)$ at which marginal codification switches from protective to risk-increasing.
If, in addition, $x>0$ and $F_{0,s}(x,s,r)<0$ for interior $s$, the cutoff is unique.
\end{proposition}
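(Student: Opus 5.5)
The plan is to reduce the sign of $F_s$ to the sign of a single auxiliary function $h$, and then show that $h$ is monotone. The proposition refers to $h$ without defining it, so I will define it myself as follows. Under the Poisson benchmark, $\mu(x,s,r)=\mu_0(r)+\eta(r)xs$, which gives $\mu_s=\eta(r)x$. Differentiating \eqref{eq:failure_decomp_main} in $s$ gives
\[
F_s(x,s,r)=e^{-\mu(x,s,r)}\Big[F_{0,s}(x,s,r)+\big(1-F_0(x,s,r)\big)\eta(r)x\Big].
\]
Since $e^{-\mu}>0$, the sign of $F_s$ equals the sign of
\[
h(s)\equiv F_{0,s}(x,s,r)+\big(1-F_0(x,s,r)\big)\eta(r)x.
\]
The two terms have a clear reading. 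The first is the protective, oversight-facing margin, which is nonpositive. The second is the learnability margin, which is nonnegative because $F_0\le 1$, $\eta>0$ and $x\ge 0$.

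The key step is to differentiate $h$ once more, assuming $F_0$ is twice differentiable in $s$ as the hypothesis on $F_{0,ss}$ implicitly requires:
\[
h'(s)=F_{0,ss}(x,s,r)-\eta(r)x\,F_{0,s}(x,s,r).
\]
Both pieces are nonnegative under the hypotheses. The first uses $F_{0,ss}\ge 0$. The second uses $\eta x\ge 0$ together with $F_{0,s}\le 0$. Hence $h$ is weakly increasing on $[0,1]$.

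The three claims then follow from this monotonicity.
\begin{itemize}
\item[(i)] A weakly increasing function can pass from negative to positive values at most once and never back. So $F_s$ changes sign at most once, and any change is from protective to risk-increasing.
\item[(ii)] If $h(0)<0<h(1)$, continuity of $h$ (which follows from continuity of $F_{0,s}$ and $F_0$) and the intermediate value theorem give some $s^{\mathrm{flip}}\in(0,1)$ with $h(s^{\mathrm{flip}})=0$. Monotonicity then gives $h\le 0$ to its left and $h\ge 0$ to its right.
\item[(iii)] For uniqueness, suppose $x>0$ and $F_{0,s}<0$ on $(0,1)$. Then $h'(s)\ge -\eta(r)xF_{0,s}>0$ on the interior, so $h$ is strictly increasing and its zero is unique.
\end{itemize}

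I expect the main obstacle to be bookkeeping rather than substance. The paper never defines $h$, so the proof must state explicitly that $h$ is the bracketed factor, normalized by the strictly positive factor $e^{-\mu}$. It must also make the needed regularity explicit: $F_0\in C^2$ in $s$, and $F_0\le 1$. A second subtlety is that "at most once" must be read as at most one sign change. With only weak monotonicity, $h$ may vanish on an interval, which is exactly why uniqueness needs the extra strictness condition.
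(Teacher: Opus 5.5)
Your proof is correct and follows the paper's argument step for step: you factor $\partial F/\partial s = e^{-\mu}h(s)$, show $h'(s)=F_{0,ss}-\eta(r)x\,F_{0,s}\ge 0$, then use continuity for existence and strict positivity of $h'$ (from $x>0$, $\eta(r)>0$, $F_{0,s}<0$) for uniqueness. One small correction: the paper does define $h(s)\equiv F_{0,s}+(1-F_0)\eta(r)x$ in the text immediately after the statement, and your definition coincides with it.
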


\noindent Proposition \ref{prop:codification_flip_main} concerns bundled reforms that often increase both oversight-facing auditability and insider-facing boundary standardization. The reversal arises when additional codification moves both margins together: overt vulnerability continues to fall, but at a diminishing rate, while within-form exploitability rises as the approval boundary becomes more stable and transferable.

\noindent Under the Poisson benchmark $\mu(x,s,r)=\mu_0(r)+\eta(r)xs$, this tradeoff is transparent.
Differentiating total failure yields
\begin{equation}
\frac{\partial F}{\partial s}
=
\exp\!\big(-\mu(x,s,r)\big)
\Big(F_{0,s}(x,s,r)+\big(1-F_0(x,s,r)\big)\eta(r)x\Big),
\label{eq:codification_tradeoff}
\end{equation}
so the sign of the marginal effect is governed by
\[
h(s)\equiv F_{0,s}(x,s,r)+\big(1-F_0(x,s,r)\big)\eta(r)x.
\]
Because $\exp(-\mu)>0$, codification becomes locally risk-increasing if and only if
\begin{equation}
\big(1-F_0(x,s,r)\big)\eta(r)x > -F_{0,s}(x,s,r).
\label{eq:codification_cutoff}
\end{equation}
Equation \eqref{eq:codification_cutoff} states the reversal directly: additional codification becomes dangerous when its contribution to within-form exploitability outweighs its remaining contribution to deterring overt abuse.
For details, see Online Appendix \ref{app:robustness_codification}.

While agencies may have good administrative reasons to codify procedures in order to use AI for efficiency gains, Proposition \ref{prop:codification_flip_main} shows that the aggregate political consequences of those incremental steps need not move in a single direction.

The curvature assumption $F_{0,ss}\ge 0$ captures a familiar institutional pattern.
Early formalization removes blatant violations and makes review feasible.
Later rounds often add process and paperwork faster than they add new enforcement capacity.
The reversal is most salient in high-scale settings, where baseline auditability may already be in place and further codification mainly improves the learnability and reuse of procedurally passing tactics.
Figure \ref{fig:one_crossing_flip_main} provides a schematic illustration of the second result.

\begin{figure}[t] 
\centering
\includegraphics[width=0.78\textwidth]{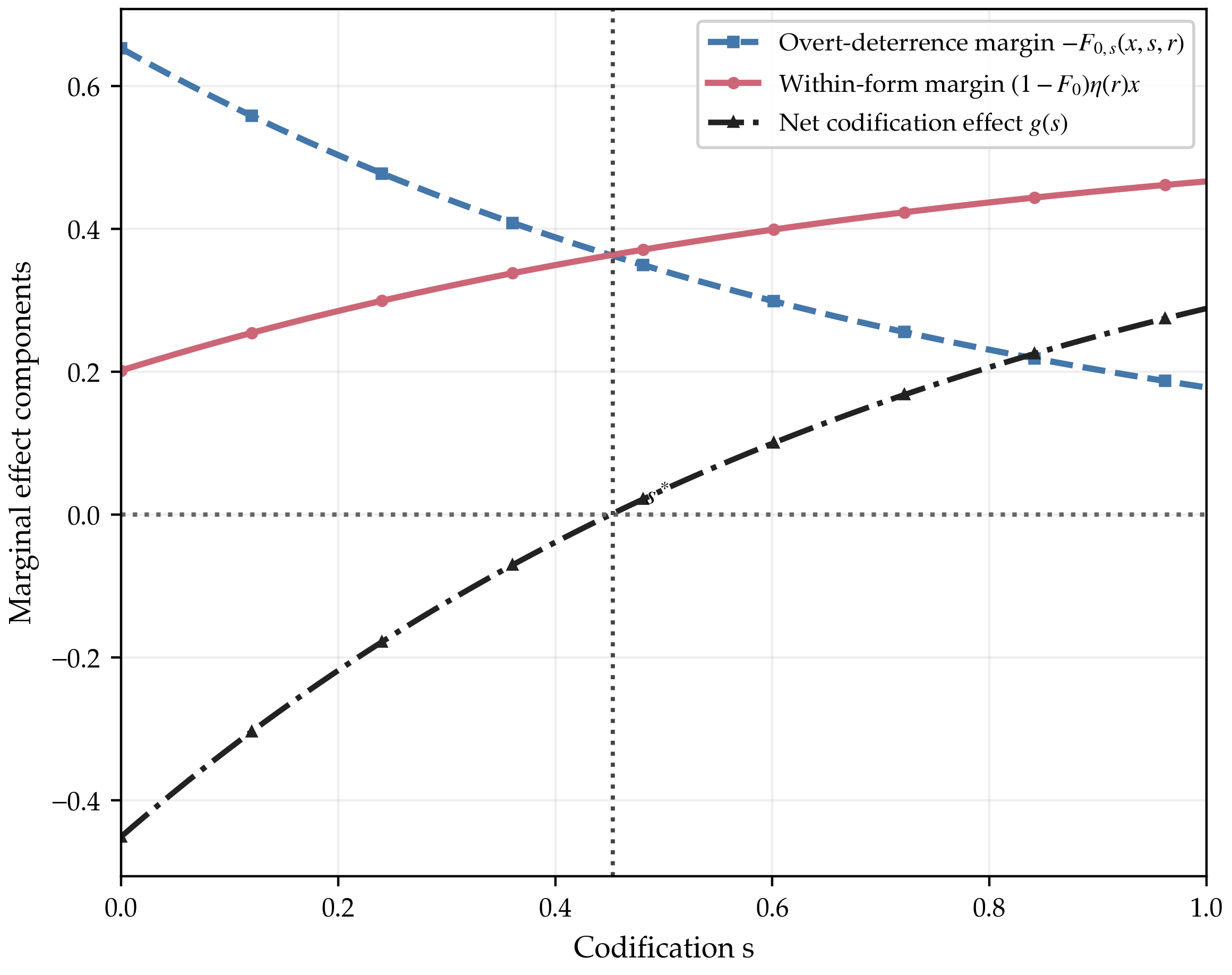}
\caption{Codification and the Shift from Constraint to Exploitability}
\label{fig:one_crossing_flip_main}
\floatnote{Additional codification is initially useful for deterring overt abuse, but eventually generates a more legible surface for within-form abuse.
Horizontal axis: codification intensity $s$.
Vertical axis: marginal-effect components.
Dashed line: overt-deterrence margin $-F_{0,s}(x,s,r)$.
Solid line (red): within-form margin $(1-F_0(x,s,r))\eta(r)x$.
Dash-dot line (black): net codification effect $h(s)$.
The crossing at $s^{\mathrm{flip}}$ marks where marginal codification switches from protective to risk-increasing.
Schematic illustration of Proposition \ref{prop:codification_flip_main}.}
\end{figure}

The underlying tradeoff becomes clearer if the reduced-form codification variable is decomposed into two analytically distinct components: auditability $s_{\mathrm{aud}}$, which improves legibility and contestation for overseers, and boundary standardization $s_{\mathrm{std}}$, which makes approval criteria more stable and transferable for insiders. 
Under that separation, the two components have clean, one-signed effects:
\[
\begin{aligned}
\frac{\partial F}{\partial s_{\mathrm{aud}}}
&=
\big(1-p_{\mathrm{wf}}(x,s_{\mathrm{std}};r)\big)\,F_{0,s_{\mathrm{aud}}}(x,s_{\mathrm{aud}},r)\le 0,
\\
\frac{\partial F}{\partial s_{\mathrm{std}}}
&=
\big(1-F_0(x,s_{\mathrm{aud}},r)\big)\exp\!\big(-\mu(x,s_{\mathrm{std}},r)\big)\,\eta(r)x\ge 0.
\end{aligned}
\]

On this account, auditability reduces risk by tightening the overt channel, whereas boundary standardization increases risk by making within-form exploitation easier. 
Online Appendix \ref{app:codification_split} unbundles that common reform margin into oversight-facing auditability and insider-facing standardization, showing which part of the bundle does which work.
This decomposition also clarifies the practical design problem.
In practice, preserving the gains from auditability without also increasing iterability requires complementary safeguards, such as durable access controls and interface rules that limit how easily passing strategies can be reused across domains.

A natural objection is that codification can narrow the gray zone itself. 
By clarifying doctrine or reducing the set of contested cases that remain practically exploitable, it may reduce both overt abuse and within-form erosion. 
Our results focus on the politically relevant range in which that contraction of ambiguity is limited, though the Appendix relaxes this assumption.\footnote{Online Appendix \ref{app:alt_microfoundation} allows codification to shrink exploitable ambiguity by scaling the standardized within-form term by $\omega(s)\in(0,1]$ with $\omega'(s)\le 0$, so that $\mu(x,s,r)=\mu_0(r)+\eta(r)\,x\,s\,\omega(s)$. In that formulation, the within-form effect of codification remains risk-increasing when $s\,\omega(s)$ does not fall with $s$; equivalently, when $\frac{d}{ds}[s\,\omega(s)]\ge 0$. If ambiguity shrinks fast enough, it can attenuate or overturn the within-form mechanism and thereby mute or eliminate the reversal of the marginal effect.}

The result rests on two broad features.
First, within-form success rises with the intensity of search made possible by standardization.
Second, once basic auditability is in place, further codification yields diminishing marginal gains in deterring overt abuse.\footnote{The logic does not depend on the Poisson benchmark or on the linear $xs$ index. It requires only a single-index mapping from within-form intensity $\mu(x,s,r)$ into success $p_{\mathrm{wf}}(\mu)$, with $p_{\mathrm{wf}}$ increasing in $\mu$; see Online Appendix \ref{app:nonlinear_mu}. Online Appendix \ref{app:wf_microfoundation} considers threshold formulations in which erosion requires at least $k\ge1$ effective moves; the key condition remains intensity below a cutoff, with $-\ln(1-\bar p)$ replaced by the corresponding threshold $\tau(\bar p)$. The marginal leverage of overt-only reforms can be aggregation-dependent under a choice-based alternative in which an autocrat substitutes strategically across channels; see Online Appendix \ref{app:alt_aggregation}. Formal derivative and dominance calculations are collected in Online Appendix \ref{app:proofs}.}

Two considerations sharpen the implications in real-world institutional settings.
First, the benchmark treats risk as accumulating across decision points in a simple additive way, as if openings in different parts of the administrative process were distinct.
In practice, vulnerabilities may be correlated because deployments often share tooling, training data, evaluation routines, and guardrails.
Evidence on universal and transferable attacks in the AI literature suggests that a tactic discovered in one system can sometimes be used against models built by other actors \citep{zou2023universal,huang2025stronger}. 
This makes scale potentially more dangerous than the benchmark implies, because expanding deployment may not merely create more sites of possible failure. 
It can also allow successful tactics to travel across shared routines and common infrastructure, including across agencies using different models or providers.

Second, the benchmark also abstracts from the unique and double-edged role of human discretion in bureaucracy. 
A leader may exploit a pliable or compromised official, but efforts to bend administration can also be disrupted by whistleblowers, refusing subordinates, leaks, or inter-office rivalry.
These forms of resistance are irregular, but they make subversion depend on the continued cooperation of particular people.
By contrast, LLM- or agent-mediated processes can make strategic adaptation less person-dependent and more procedure-dependent.
When passing strategies can be learned against stable administrative criteria, exploitation becomes easier to iterate and reuse, and less exposed to interruption by a single dissenting official.

For design purposes, it helps to distinguish safeguards by the part of the failure process they affect.
In our parameterization, safeguards shape total failure $F$ through three outcome-relevant terms: overt vulnerability $F_0(x,s,r)$, baseline within-form intensity $\mu_0(r)$, and standardized within-form intensity $\eta(r)$.
Reforms that mainly reduce $F_0$ matter most when failure still occurs through visible departures from the process.
Once within-form success is already likely, however, those reforms lose leverage.
By contrast, reforms that reduce $\mu_0$ or $\eta$ constrain within-form search itself, by limiting access, reducing the transferability of successful tactics across settings, or making contestation and remedy faster.
Where institutional design can lower all three terms at once, total failure falls weakly throughout.

This argument also has a temporal dimension. 
The political problem is not simply that safeguards may be weak.
It is that governments can expand AI-mediated administration faster than they can build the rules, staffing, and remedies needed to keep that expansion under democratic control. 
Even a government acting in good faith, and anticipating turnover, may scale deployment before those protections are in place, because capacity gains arrive quickly, whereas many protections require statutory authority, independent staffing and funding, separation of duties, and the capacity to detect, challenge, and remedy abuse. 
Unfortunately, such protections are often developed only in response to visible failures or scandal. 
Online Appendix \ref{app:illustration} shows why this timing asymmetry matters in the benchmark.
When within-form vulnerability takes the form $\mu(x,s,r)=\mu_0(r)+\eta(r)xs$, the protective value of safeguards that reduce $\eta(r)$ rises with the size of the standardized administrative footprint $xs$.
As a result, holding safeguards fixed becomes more dangerous as deployment and codification expand.

The next proposition shows how ordinary pressure to modernize administration can create a new political risk even when safeguards do not weaken.
When governments face stronger pressure to cut costs, reduce delay, or standardize decisions, they adopt more AI-mediated administration and, on the binding regime, more codification as well.
If safeguards do not rise in step, a government can move towards a setting in which strategic exploitation is becomes likely because administrative expansion has outpaced institutional protection.


\begin{proposition}[Modernization pressure can push adoption across the within-form threshold when safeguards are fixed]
\label{prop:scale_increases_lambda}
Fix safeguards at bundle $r$ and work along the binding regime $s=S(x)$:
\[
U_1(x,S(x);r,\lambda)=\lambda\,G(x,S(x))-C(x,S(x))-\delta\,\Omega\,F(x,S(x),r).
\]

\smallskip
\noindent\emph{(i) Pressure increases adopted scale (and codification on the binding regime).}
Suppose that for this fixed $r$ and over the relevant range the objective is strictly concave in $x$ and admits an interior maximizer $x^*(\lambda;r)$.
Assume also that $\frac{d}{dx}G(x,S(x))>0$ (for example, under the maintained assumptions $G_x>0$, $G_s\ge 0$, and $S'(x)\ge 0$).
Then $x^*(\lambda;r)$ is strictly increasing in $\lambda$.
Moreover, if $S$ is weakly increasing, the implied codification choice $s^{\mathrm{adopt}}(\lambda;r)=S(x^*(\lambda;r))$ is weakly increasing in $\lambda$.

\smallskip
\noindent\emph{(ii) Threshold implication under fixed safeguards (Poisson benchmark).}
Fix any target threshold $\bar p\in(0,1)$ and define
\[
x^{\mathrm{crit}}(\bar p;r)\equiv \inf\big\{x\in[0,\bar x] : p_{\mathrm{wf}}(x,S(x);r)\ge \bar p\big\}.
\]
If there exist $\lambda_L<\lambda_H$ such that $x^*(\lambda_L;r)<x^{\mathrm{crit}}(\bar p;r)$ and $x^*(\lambda_H;r)>x^{\mathrm{crit}}(\bar p;r)$,
then there exists $\lambda^{\mathrm{crit}}\in(\lambda_L,\lambda_H]$ such that
\[
p_{\mathrm{wf}}\big(x^*(\lambda^{\mathrm{crit}};r),S(x^*(\lambda^{\mathrm{crit}};r));r\big)= \bar p,
\]
and for all $\lambda\ge \lambda^{\mathrm{crit}}$ it satisfies $p_{\mathrm{wf}}(x^*(\lambda;r),S(x^*(\lambda;r));r)\ge \bar p$.
\end{proposition}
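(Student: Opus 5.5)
Part (i) is a one-dimensional monotone comparative static, so I would use the first-order condition together with the implicit function theorem. Write $V(x;\lambda)\equiv U_1(x,S(x);r,\lambda)=\lambda g(x)-k(x)$, where $g(x)\equiv G(x,S(x))$ and $k(x)\equiv C(x,S(x))+\delta\Omega F(x,S(x),r)$. The maximizer is interior, so $x^*(\lambda;r)$ satisfies $V_x(x^*;\lambda)=\lambda g'(x^*)-k'(x^*)=0$. Strict concavity gives $V_{xx}<0$, and $V_{x\lambda}=g'(x)>0$ by the assumption $\frac{d}{dx}G(x,S(x))>0$. If $S$ is differentiable, the implicit function theorem yields
\[
\frac{dx^*}{d\lambda}=-\frac{V_{x\lambda}}{V_{xx}}=-\frac{g'(x^*)}{V_{xx}(x^*;\lambda)}>0 .
\]
If $S$ is only continuous, I would use a revealed-preference argument instead, since it needs no differentiability. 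Take $\lambda<\lambda'$ and write $x=x^*(\lambda)$, $x'=x^*(\lambda')$. Optimality at each $\lambda$ gives two inequalities, and adding them yields $(\lambda'-\lambda)\,(g(x')-g(x))\ge 0$. Because $g$ is strictly increasing, this forces $x'\ge x$; this is the strict single-crossing/increasing-differences argument. Strictness follows from the FOC: if $x'=x$, then $\lambda g'(x)=k'(x)=\lambda' g'(x)$, which contradicts $g'(x)>0$. So the FOC step needs differentiability only in $x$ at the optimum. Monotonicity of $s^{\mathrm{adopt}}=S\circ x^*$ then follows directly, because $S$ is weakly increasing.

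For part (ii), I would first establish two properties of the map $\phi(x)\equiv p_{\mathrm{wf}}(x,S(x);r)=1-\exp(-\mu_0(r)-\eta(r)xS(x))$. First, $\phi$ is continuous, since $S$ is continuous. Second, $\phi$ is weakly increasing, since $x\mapsto xS(x)$ is a product of nonnegative nondecreasing functions and $\eta(r)>0$; this is the binding-path analogue of Proposition \ref{prop:alignment_surface_scale}. By monotonicity, $\{x:\phi(x)\ge\bar p\}$ is an up-set, and by continuity it is closed. It is nonempty because $x^*(\lambda_H)$ exceeds its infimum, so it lies in the set. Hence the set equals $[x^{\mathrm{crit}},\bar x]$. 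Continuity, together with $\phi(x)<\bar p$ for $x<x^{\mathrm{crit}}$, then gives $\phi(x^{\mathrm{crit}})=\bar p$ whenever $x^{\mathrm{crit}}>0$. That case holds here because $x^*(\lambda_L)\ge 0$ and $x^*(\lambda_L)<x^{\mathrm{crit}}$.

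Next I need continuity of $\lambda\mapsto x^*(\lambda)$ on $[\lambda_L,\lambda_H]$. Under differentiability this comes from the implicit function theorem. Otherwise I would apply Berge's maximum theorem, using that the strictly concave objective has a unique maximizer and is continuous in $(x,\lambda)$. The intermediate value theorem then gives some $\lambda\in(\lambda_L,\lambda_H)$ with $x^*(\lambda)=x^{\mathrm{crit}}$. Because $x^*$ is strictly increasing, this $\lambda$ is unique; call it $\lambda^{\mathrm{crit}}$. It satisfies $\phi(x^*(\lambda^{\mathrm{crit}}))=\phi(x^{\mathrm{crit}})=\bar p$. For every $\lambda\ge\lambda^{\mathrm{crit}}$, part (i) gives $x^*(\lambda)\ge x^{\mathrm{crit}}$, and monotonicity of $\phi$ gives $\phi(x^*(\lambda))\ge\bar p$. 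For $\lambda$ above $\lambda_H$, I would read this within the range where the interior-maximizer hypothesis holds.

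The main obstacle is continuity of $x^*$ in $\lambda$. Without it, $x^*$ could jump over $x^{\mathrm{crit}}$ and the equality $\phi=\bar p$ at an adopted point could fail. A second, smaller point is that strict concavity is assumed only "over the relevant range," so I would state explicitly that the range includes $[\lambda_L,\lambda_H]$ and the corresponding maximizers.
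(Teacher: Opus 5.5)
Your proof is correct and follows the paper's route. Part (i) uses the first-order condition plus the implicit function theorem, and part (ii) combines continuity and monotonicity of $p_{\mathrm{wf}}(x,S(x);r)$ along the binding path with continuity of $x^*(\lambda;r)$ and the intermediate value theorem. Your additions refine steps the paper asserts more briskly rather than change the approach: the revealed-preference argument for strict monotonicity (which does not need a strictly negative second derivative), applying the intermediate value theorem to $x^*$ itself, and noting that $p_{\mathrm{wf}}(x^{\mathrm{crit}},S(x^{\mathrm{crit}});r)=\bar p$ requires $x^{\mathrm{crit}}>0$, which $x^*(\lambda_L;r)<x^{\mathrm{crit}}$ guarantees.
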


\noindent\emph{Proof sketch.}
For part (i), the interior first-order condition along $s=S(x)$ and strict concavity imply, via the implicit function theorem, that $\frac{dx^*(\lambda;r)}{d\lambda}>0$ when $\frac{d}{dx}G(x,S(x))>0$.
For part (ii), define $\zeta(\lambda)\equiv p_{\mathrm{wf}}(x^*(\lambda;r),S(x^*(\lambda;r));r)$.
Under the stated assumptions, $\zeta$ is continuous and weakly increasing in $\lambda$, so the bracketing condition implies existence of $\lambda^{\mathrm{crit}}$ by the intermediate value theorem, and monotonicity gives $\zeta(\lambda)\ge\bar p$ for all $\lambda\ge \lambda^{\mathrm{crit}}$.
Online Appendix \ref{app:proofs} provides the full argument.

\noindent Evaluated at the crossing point where $p_{\mathrm{wf}}=\bar p$, total failure satisfies
\[
\begin{aligned}
F\big(x^*(\lambda^{\mathrm{crit}};r),S(x^*(\lambda^{\mathrm{crit}};r)),r\big)
&=
F_0\big(x^*(\lambda^{\mathrm{crit}};r),S(x^*(\lambda^{\mathrm{crit}};r)),r\big)
\\
&\qquad
+ \Big(1-F_0\big(x^*(\lambda^{\mathrm{crit}};r),S(x^*(\lambda^{\mathrm{crit}};r)),r\big)\Big)\bar p.
\end{aligned}
\]
by \eqref{eq:failure_decomp_main}.
At that point, overt vulnerability still matters, but the polity has also entered a region in which within-form exploitation is no longer negligible.

Proposition~\ref{prop:scale_increases_lambda} shows the potential danger posed by pressure to modernize administration. 
A temporary push to cut costs, reduce delay, or standardize decisions can increase adopted scale and codification enough to move the system into a riskier region.
That movement may also be difficult to reverse once the pressure spike passes. 
In practice, later governments rarely rebuild the administrative stack from scratch.
They inherit installed scale, review procedures, and routines already embedded in service delivery.
The threshold-crossing logic does not depend on the Poisson benchmark or on the linear $xs$ index; Appendix I shows that it extends to any monotone within-form intensity index.

This makes the next problem one of selective repair.
Democratic actors confronting a risky inherited system rarely want less recordkeeping or weaker review.
Rather, they want to preserve auditability while reducing the portability and reuse that make the workflow easier to exploit from within.
 Appendix~\ref{app:codification_split} formalizes this distinction with a local split between oversight-facing auditability $s_{\mathrm{aud}}$ and insider-facing standardization $s_{\mathrm{std}}$.
At adoption, common reform packages often increase both together.
After a crisis, by contrast, repair primarily targets the part of the workflow that makes successful tactics portable and reusable. 
For the persistence extension, installed scale and auditability are therefore treated as temporarily sunk, while repair lowers insider-facing standardization.

Whenever baseline within-form vulnerability is not already above the concern threshold, there is a critical level of insider-facing standardization above which within-form abuse becomes likely. 
Suppose a temporary high-pressure episode leaves the polity above that threshold. 
After pressure subsides, a democratic government may find some refactoring worthwhile, yet still stop short of enough repair to move fully back below the threshold once marginal refactoring costs exceed marginal benefits in the low-risk region. 
The result is partial but incomplete post-crisis repair. 
The inherited workflow is rolled back to some degree, but not enough to restore the safer architecture that would have been chosen from scratch under ordinary conditions. 
Appendix~\ref{app:post_crisis_unwinding} states and proves the formal result.

Taken together, these results describe a path-dependent problem of democratic control in a technocratic bureaucracy. 
Codification and scale can harden constraints by improving detectability and contestation, but they can also make it easier to learn and reuse successful strategies for within-form abuse.
Temporary modernization pressure can move institutions into that riskier region before safeguards catch up, and once a large standardized administrative stack is installed, later democratic governments may only partially unwind it.
Democratic resilience therefore depends not only on where adoption first lands, but also on whether institutions can preserve auditability while keeping insider-facing standardization, and the costs of restructuring inherited workflows, below the point at which temporary expansions become durable sources of vulnerability.
The next section uses this logic to discuss design levers that preserve auditability while limiting opportunities for within-form abuse, and to identify empirical signatures of incomplete post-crisis repair.

\section{Discussion}
\label{sec:discussion}

Procedural modernization is pursued for a familiar democratic reason: when delegated decisions are structured by rules, records, review, and contestation, they become easier to supervise and challenge \citep{mccubbins1987administrative,mccubbins1989structure,scott1998seeing,power1997audit,kroll2017accountable}. 
As probabilistic AI systems are incorporated into public administration, demand for verifiability rises: officials want outputs they can defend against statutes, constitutions, and procedure, and they want audit trails that make departures detectable and attributable. 
Our results highlight a paradox under adverse political turnover. 
Design choices made to make administration more reviewable can also reorganize authority inside the state in ways that later advantage a successor willing to leverage the inherited process to their strategic advantage. 
The same compliance apparatus that deters blatantly impermissible acts can also expose a learnable approval boundary that insiders can probe, map, and optimize against in contested domains where legality is ambiguous and remedies take time. 
The distinctive vulnerability is therefore not hidden illegality, but within-form erosion that clears procedure inside contested legality.

This learnability mechanism differs from standard accounts of bureaucratic control. 
In conventional principal-agent stories, rules and recordkeeping reduce slack by increasing observability and sanctionability \citep{huber2002deliberate}. 
Here, standardization can shift advantage toward whoever controls iteration through the compliance process. 
Under incomplete contracts, even when review is an imperfect guide to underlying legality, it can still provide a predictable signal of which actions, justifications, or submissions are likely to receive approval. 
When review relies on such a signal, actors learn to optimize for the criteria that govern approval rather than for the underlying objective those criteria imperfectly represent. 

A successor with operational authority can repeatedly draft and revise policy memos, eligibility criteria, enforcement priorities, procurement standards, or personnel procedures, observe feedback generated by internal review and compliance responses, and then refine the form, sequencing, and justification until the action is judged to pass. 
Critically, the informational asymmetry may be selective.
Even when oversight institutions retain formal authority, external actors may find it difficult to understand the operational boundary, while insiders learn it quickly through repeated interaction \citep{gorsuch2024overruled}. 
Thus, the compliance architecture changes the relative speed and fidelity with which different actors can learn where permissibility lies.

The alignment-surface result translates this idea into an architecture-level threshold.
When the compliance-governed stack is limited in scope, or when the approval boundary is unstable enough that feedback does not transfer cleanly across attempts and domains, within-form abuse potentials are difficult to discover and scale. 
As the stack grows and the boundary becomes stable and transferable, the probability that at least one regime-relevant within-form package is found rises sharply. 
This threshold logic helps explain why modernization can appear locally beneficial while creating growing systemic fragility. 
Incremental expansions that each seem justified on administrative grounds can cumulatively move institutions across the frontier without any single decision appearing to change the constitutional order.

This dynamic explains why procedural automation can eventually become counterproductive. 
Early investments in standardization strengthen oversight against overtly impermissible acts by improving documentation, traceability, and the feasibility of review. 
As those mechanisms mature, however, marginal oversight gains from further procedural refinement can diminish, while marginal gains in insider iterability can continue to rise because the approval boundary becomes more reproducible and portable. 
Additional codification can then shift abuse away from overt violations and toward within-form strategies that preserve the appearance of legality \citep{bermeo2016backsliding,varol2015stealth,scheppele2018autocratic}.

Modernization explains how these risks can accumulate in practice. 
Agencies and legislatures face recurring pressure to deliver services, respond to crises, and administer complex programs at scale. 
Adoption is therefore often a sequence of locally defensible expansions rather than a single constitutional choice. 
Proposition~\ref{prop:scale_increases_lambda} shows how such pressure can move institutions across the within-form threshold before the surrounding safeguards catch up.

The persistence extension sharpens the point. 
Later democratic governments do not usually rebuild the administrative stack from scratch. 
They inherit installed scale, shared review artifacts, and routines already embedded in service delivery. 
For that reason, rollback is often selective rather than complete. 
Governments may preserve audit trails, documentation, and reviewability while trying to reduce the portability and reuse of internal compliance feedback. 
Where refactoring costs rise with installed scale, that repair can be real yet incomplete. 
Temporary pressure can therefore leave durable inherited vulnerability even after ordinary political conditions return.

We suggest specific reasons why safeguards cannot simply be assumed to keep pace with capacity. 
First, throughput constraints limit unilateral iteration by reducing the number of variants a principal can submit for review within a consolidation window.
Second, decoupling evaluation from deployment reduces transferability: when the process that generates compliance feedback mirrors the pathway that implements policy, feedback is easier to convert into deployable strategy; institutionally independent review and controls on experimentation make that conversion more difficult. 
Third, remedy and contestation capacity can limit persistence to the extent that they can reverse attempted abuse quickly and reliably. 
These complements are hard to scale because they require durable coalitions, institutional capacity, and often independent authority. 
They are also politically asymmetric: defenders must secure many potential points of vulnerability, while a successor needs only one workable within-form exploit that they can put to work. 
Finally, in deep governance domains, the relevant constraints are not fully specifiable in advance. 
Codification can clarify doctrine and reduce degrees of freedom, but it cannot eliminate contested judgments rooted in value conflict and evolving interpretation. 
Efforts to operationalize such judgments may therefore make review more legible and predictable in ways that facilitate strategic adaptation.


The mechanism should be strongest where AI is embedded in consequential approval or deployment workflows, where internal feedback is informative enough to guide adaptation, where standardized review artifacts are reused across many cases or domains, and where evaluation is tightly coupled to deployment. 
It should be weaker where AI remains advisory, access to iterative feedback is tightly controlled, or review is institutionally decoupled from implementation.

These scope conditions also discipline the evaluation of reform proposals. 
The model suggests a criterion for evaluating reform that does not collapse into generic calls for ``more oversight.'' 
Measures that make decisions more visible and reviewable can still increase risk if they also make the approval process more legible to insiders or allow contested approvals to persist long enough to matter. 
Reform should therefore be assessed in terms of how it changes opportunities for strategic adaptation under future political control, rather than only in terms of ex post supervision.

This also clarifies how the argument differs from concerns about older administrative scoring tools. 
Deterministic rule systems and narrower expert models could also routinize decisions, but they typically operated in more limited domains or embedded the operative rule more directly in the decision itself. 
Probabilistic generative systems are distinctive because they can be deployed across heterogeneous and more open-ended tasks while still requiring an external score-to-action layer that converts uncertain outputs into administrative action aligned to Constitutional, legal, and operational rules. 
While some fear that AI-governed administration could become Kafka-esque, and others expect logging and formalization to make it fully transparent, the concern here is different from both: systems can become more auditable while also more learnable to insiders. 
Because probabilistic models raise concerns about arbitrariness and defensibility, organizations often respond by demanding reproducibility, standardized rubrics, and traceable justifications. 
That combination of broad applicability and operationalization pressure intensifies the demand for standardization, reviewability, and reusable compliance artifacts.

The central observable quantities are therefore scope, boundary stability, and the institutional environment that governs iteration, decoupling, and persistence. 
Following executive turnover, jurisdictions or agencies with broader penetration of automated processes and more reusable compliance rubrics should exhibit a greater share of consequential actions that are procedurally successful despite contestation, relative to otherwise similar settings. 
The persistence extension adds a further implication: temporary crises or performance shocks should be followed by partial rollback rather than full restoration, especially where installed bases are large. 
One should therefore look for durable increases in routinized scope, reusable review artifacts, or tightly coupled review-and-deployment workflows even after the original pressure episode subsides. 
Where internal records are accessible, one should also observe traces of iteration and selective repair, including strategic sequencing, incremental reframing around compliance boundaries, segmentation of some workflows without full decoupling, and unequal rollback across units that inherited different installed bases.

Several modeling choices likely render the argument conservative. 
Adoption is modeled as a good-faith tradeoff that internalizes turnover risk; if adopters are myopic, polarized, or expect to benefit from within-form advantages, threshold crossing and incomplete unwinding become more likely. 
The main text treats overt and within-form channels as separable conditional on architecture; if within-form erosion weakens oversight institutions and thereby increases overt vulnerability, complementarities strengthen the caution. 
These extensions underscore the broader implication: procedural modernization can reduce overt illegality while increasing the feasibility of procedurally clean consolidation at scale, and in AI-governed compliance systems this possibility should be treated as a first-order institutional risk under turnover.

\section{Conclusion}
\label{sec:conclusion}

Governments are increasingly interested in using AI to make administration cheaper, faster, and more consistent.
But probabilistic systems can be used in public administration only if their outputs can be checked against law, procedure, and official records.
This paper has argued that the same work required to make AI governable can also create a political vulnerability under turnover.
In contested legal domains, efforts to make official decisions more reviewable can produce stable approval criteria and repeatable procedures that insiders learn to navigate.
After turnover, a successor may be able to revise actions until they clear those inherited procedures, preserve the appearance of lawful rule, and still shift authority in consequential ways.

The model highlights three implications. 
First, the danger is nonlinear.
As AI-mediated procedures spread across more consequential domains, and as review criteria become more stable across repeated attempts, the chance that a successor with illiberal aims discovers a workable within-form strategy rises sharply. 
Second, efforts to make official decisions easier to review can eventually make those decisions easier to game. 
Early codification improves documentation and deters obvious abuse, but once those gains begin to level off, further standardization can make strategic adaptation easier.
Third, temporary modernization pressure can leave durable vulnerability behind.
Later governments usually inherit administrative procedures they did not design, and they may rationally stop short of fully dismantling them even after the crisis or performance pressure that justified expansion has passed.

These results change what it means to evaluate administrative reform for democratic robustness.
It is not enough to ask whether a system is strict, transparent, or well documented.
The more difficult question is whether reform makes decisions more reviewable to overseers while also making the process more legible to insiders who can repeatedly act within it. 
The danger is not auditability \emph{per se}, but reform packages that raise oversight-facing reviewability and insider-facing legibility together. 
That is a problem of authority and institutional design as much as of technology. 
Robust oversight therefore depends not only on records and review, but also on limits on who can learn from internal feedback, on meaningful separation between evaluation and deployment, and on remedies that can reverse contested actions before they persist long enough to matter.

The argument yields three empirical predictions that distinguish it from generic claims that more oversight is always protective or that automation is automatically dangerous. 
First, following political turnover, procedurally successful yet publicly contested actions should be more common where AI-governed workflows have broader routinized scope, more stable review criteria, and more reusable compliance artifacts. 
Second, this effect should be strongest where insiders can iterate rapidly on internal feedback and where evaluation is tightly coupled to deployment; it should be weaker where AI remains advisory, access to review feedback is restricted, or review is institutionally decoupled from implementation. 
Third, crisis- or capacity-driven expansions should be followed by selective rather than complete rollback. 
Organizations should preserve auditability while only partially reducing insider-facing standardization, especially where installed scale is large. 
Where internal records are available, one should observe traces of probing and selective repair: repeated reframing around approval criteria, strategic sequencing, segmentation without full decoupling, and unequal rollback across units with different inherited installed bases. 
These signatures are distinctive because the relevant interaction is between turnover, routinized scope, and access to iterative feedback, not automation generally. 

Theoretically, the next step is to model how safeguards are built, weakened, and repaired over time, and how within-form erosion interacts with more overt abuses of power.
The broader implication is not that auditability or procedural regularity is undesirable. 
It is that expansions in AI use should be judged not only by their immediate administrative gains, but by the larger institutional packages they create and how those packages perform under partisan turnover. 
Changes that improve oversight visibility can be protective, but changes that simultaneously increase insider iterability, transferability, and persistence can create a learnable approval boundary for future leaders. 
Democracies should therefore ask how AI-mediated administration, even when paired with oversight procedures, may create durable institutional capacities that political successors inherit, learn, and exploit.

\bibliographystyle{plainnat}
\bibliography{ai_governance}

\clearpage

\appendix

\section{Appendix overview and notation}
This online appendix is organized in four parts.
The current section reviews the notation. 
Section B collects proofs and derivations used directly in the Results section.
Sections C--D provide microfoundations and interpretive support for the within-form mechanism.
Sections E--G collect the codification and persistence extensions.
Sections H--J record robustness notes and a final illustration.

For quick reference, the main objects are
\[
\begin{aligned}
x &:\ \text{scale, the share of official decisions routed through AI-assisted review},\\
s &:\ \text{codification, the degree of standardization and repeatability},\\
r=(r_m,r_\kappa,r_q) &:\ \text{safeguards for unilateral throughput, test-to-deploy coupling, and remedy speed},\\
p_{\mathrm{wf}}(x,s;r) &:\ \text{within-form success probability under an autocratic successor},\\
F_0(x,s,r) &:\ \text{overt-abuse success probability},\\
F(x,s,r) &:\ \text{total democratic failure under the benchmark aggregation},\\
S(x) &:\ \text{minimum codification needed to operate the workflow at scale } x,\\
U_1(x,s;r,\lambda) &:\ \text{adoption-stage objective trading administrative value against turnover fragility}.
\end{aligned}
\]

The Results section uses both a search-based representation,
\[
\mu^{\mathrm{search}}(x,s,r)\equiv \mu_0(r)+N(x)\nu(s;r),
\qquad
p_{\mathrm{wf}}(x,s;r)=1-\exp\!\big(-\mu^{\mathrm{search}}(x,s,r)\big),
\]
and the Poisson benchmark,
\[
\mu(x,s,r)\equiv \mu_0(r)+\eta(r)\,x\,s,
\qquad
p_{\mathrm{wf}}(x,s;r)=1-\exp\!\big(-\mu(x,s,r)\big),
\]
with safeguards shifting $\mu_0(r)$ and $\eta(r)$ in protective directions.

Under the separable aggregation used in the main text,
\[
F(x,s,r)=p_{\mathrm{wf}}(x,s;r)+\big(1-p_{\mathrm{wf}}(x,s;r)\big)F_0(x,s,r)
=1-\big(1-F_0(x,s,r)\big)\exp\!\big(-\mu(x,s,r)\big).
\]
The adoption stage imposes the operational codification constraint $s\ge S(x)$ and uses
\[
U_1(x,s;r,\lambda)=\lambda\,G(x,s)-C(x,s)-\delta\,\Omega\,F(x,s,r),
\]
with $\lambda$ denoting modernization pressure, $\delta$ turnover risk, and $\Omega$ the loss from democratic failure.

\section{Proofs and derivations}
\label{app:proofs}
This section provides full proofs and supporting derivatives referenced in the Results section.
Subsection B.1 derives the threshold characterization for alignment-surface exploitability.
Subsection B.2 gives the scale comparative statics, both holding codification fixed and along the binding regime.
Subsection B.3 records the leverage and dominance derivatives under the benchmark aggregation.
Subsection B.4 gives the binding-codification argument and the threshold-crossing proof for modernization pressure.

\subsection{Proof of the threshold characterization for alignment-surface exploitability}
Proposition \ref{prop:alignment_surface_scale} expresses within-form success in single-index form.
This subsection derives the threshold characterization used in the main text for when the inherited review boundary becomes politically exploitable.

Within-form success is
\[
p_{\mathrm{wf}}(x,s;r)=1-\exp\!\big(-\mu^{\mathrm{search}}(x,s,r)\big),
\qquad
\mu^{\mathrm{search}}(x,s,r)\equiv \mu_0(r)+N(x)\nu(s;r),
\]
where $\nu(s;r)\equiv-\ln(1-\pi(s;r))$ and $\pi(s;r)$ is defined in \eqref{eq:pi_def}.

Differentiate with respect to $x$:
\[
\frac{\partial p_{\mathrm{wf}}}{\partial x}
=
\exp\!\big(-\mu^{\mathrm{search}}(x,s,r)\big)\,\frac{\partial \mu^{\mathrm{search}}}{\partial x}
=
\exp\!\big(-\mu^{\mathrm{search}}(x,s,r)\big)\,N'(x)\nu(s;r),
\]
which is weakly nonnegative when $N'(x)\ge 0$ and $\nu(s;r)\ge 0$. Differentiate with respect to $s$:
\[
\frac{\partial p_{\mathrm{wf}}}{\partial s}
=
\exp\!\big(-\mu^{\mathrm{search}}(x,s,r)\big)\,\frac{\partial \mu^{\mathrm{search}}}{\partial s}
=
\exp\!\big(-\mu^{\mathrm{search}}(x,s,r)\big)\,N(x)\nu_s(s;r),
\]
which is weakly nonnegative when $\nu_s(s;r)\ge 0$ and $N(x)\ge 0$. Differentiate with respect to a safeguard component $r_j$:
\[
\frac{\partial p_{\mathrm{wf}}}{\partial r_j}
=
\exp\!\big(-\mu^{\mathrm{search}}(x,s,r)\big)\Big(\mu_{0,r_j}(r)+N(x)\nu_{r_j}(s;r)\Big),
\]
which is weakly nonpositive when $\mu_{0,r_j}(r)\le 0$ and $\nu_{r_j}(s;r)\le 0$.

For the threshold associated with alignment-surface exploitability, fix $\bar p\in(0,1)$ and impose $p_{\mathrm{wf}}(x,s;r)\le \bar p$:
\[
1-\exp\!\big(-\mu^{\mathrm{search}}(x,s,r)\big)\le \bar p
\quad\Longleftrightarrow\quad
\exp\!\big(-\mu^{\mathrm{search}}(x,s,r)\big)\ge 1-\bar p.
\]
Taking logs and multiplying by $-1$ yields the equivalent condition
\[
\mu^{\mathrm{search}}(x,s,r)\le -\ln(1-\bar p),
\]
which is \eqref{eq:alignment_surface_condition}.
This inequality does not define the review boundary itself.
It identifies when that inherited boundary is associated with non-negligible within-form success.
Under the benchmark parameterization $\mu(x,s,r)=\mu_0(r)+\eta(r)xs$, solving for $x$ yields \eqref{eq:xcrit_scale} whenever $\eta(r)s>0$.
\hfill $\square$

\begin{figure}[t]
\centering
\includegraphics[width=0.74\textwidth]{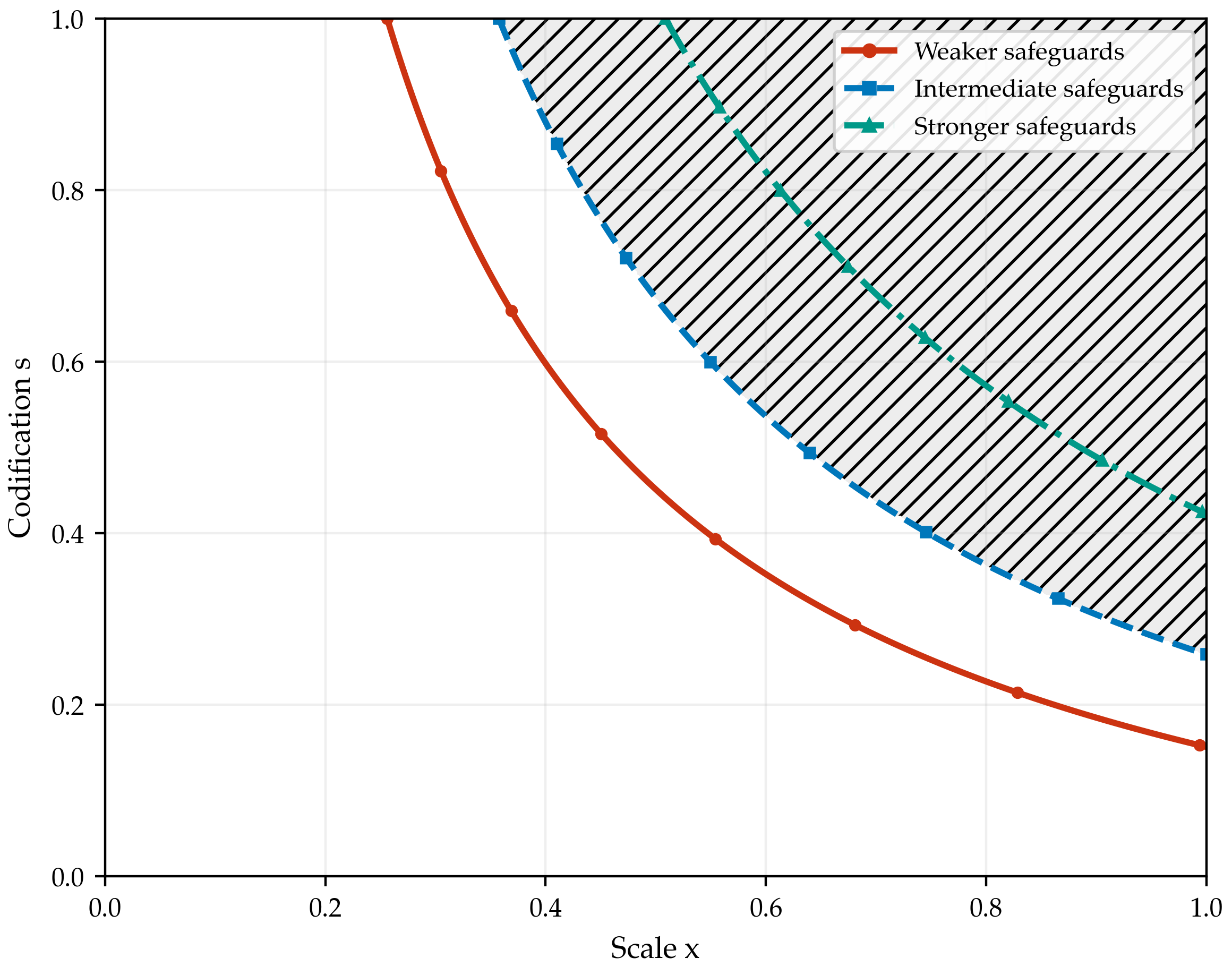}
\caption{Search-based threshold for alignment-surface exploitability.}
\label{fig:appendix_search_surface_threshold}
\floatnote{Horizontal axis: scale $x$.
Vertical axis: codification $s$.
Each curve gives the threshold condition at which within-form success equals $\bar p=0.60$, i.e., $\mu_0(r)+N(x)\nu(s;r)=\tau(\bar p)$.
Weaker safeguards shift the threshold inward, while stronger safeguards shift it outward.
Points above a curve correspond to within-form success above $\bar p$.
Schematic geometric companion to Proposition \ref{prop:alignment_surface_scale}.}
\end{figure}

\subsection{Comparative statics of total failure with respect to scale}
The Results section emphasizes that scale can reduce overt vulnerability while still increasing total failure by expanding the scope for procedurally passing failures. This subsection derives the derivative of total failure with respect to scale, first holding codification fixed and then along the binding codification regime $s=S(x)$.

\smallskip
\noindent
Write
\[
F(x,s,r)=1-\big(1-F_0(x,s,r)\big)\exp\!\big(-\mu_0(r)-\eta(r)xs\big).
\]
Holding $(s,r)$ fixed and differentiating with respect to $x$ yields
\[
\frac{\partial F}{\partial x}
=
-\Big[-F_{0,x}(x,s,r)\exp\!\big(-\mu_0(r)-\eta(r)xs\big)
+
\big(1-F_0(x,s,r)\big)\exp\!\big(-\mu_0(r)-\eta(r)xs\big)\big(-\eta(r)s\big)\Big].
\]
Rearranging gives
\begin{equation}
\frac{\partial F}{\partial x}
=
\exp\!\big(-\mu_0(r)-\eta(r)xs\big)\,F_{0,x}(x,s,r)
+
\big(1-F_0(x,s,r)\big)\exp\!\big(-\mu_0(r)-\eta(r)xs\big)\,\eta(r)s.
\label{eq:dFdx}
\end{equation}
In particular, even if $F_{0,x}(x,s,r)<0$ locally, total failure increases with scale whenever
\begin{equation}
F_{0,x}(x,s,r)>-\big(1-F_0(x,s,r)\big)\eta(r)s.
\label{eq:scale_condition}
\end{equation}
\hfill $\square$

\smallskip
\noindent
When codification is tied to scale by the binding regime $s=S(x)$, the relevant comparative static is the total derivative $dF(x,S(x),r)/dx$.
Using \eqref{eq:failure_decomp_main} and the chain rule,
\begin{equation}
\begin{aligned}
\frac{d}{dx}F(x,S(x),r)
&=
\exp\!\big(-\mu(x,S(x),r)\big)\Big(F_{0,x}(x,S(x),r)+F_{0,s}(x,S(x),r)\,S'(x)
\\
&\qquad\qquad\qquad\qquad
+\big(1-F_0(x,S(x),r)\big)\eta(r)\big(S(x)+xS'(x)\big)\Big),
\end{aligned}
\label{eq:dFdx_binding}
\end{equation}
where $\mu(x,S(x),r)=\mu_0(r)+\eta(r)xS(x)$.
Thus, even if both scaling and induced codification reduce overt vulnerability ($F_{0,x}<0$ and $F_{0,s}\le 0$), total failure rises with scale along the binding regime whenever
\begin{equation}
F_{0,x}(x,S(x),r)+F_{0,s}(x,S(x),r)\,S'(x)
>
-\big(1-F_0(x,S(x),r)\big)\eta(r)\big(S(x)+xS'(x)\big).
\label{eq:scale_condition_binding}
\end{equation}
\hfill $\square$

\subsection{Leverage and dominance under the benchmark aggregation}
\label{app:partials_summary}
This subsection records the leverage and dominance derivatives used in the Results section. Under the benchmark aggregation, total failure is monotone in overt vulnerability $F_0$ and in the within-form intensity components $\mu_0(r)$ and $\eta(r)$.

For reference, under the benchmark aggregation \eqref{eq:failure_decomp_main} and the Poisson benchmark $p_{\mathrm{wf}}(x,s;r)=1-\exp(-\mu_0(r)-\eta(r)xs)$, the partial derivatives of $F$ with respect to $(F_0,\mu_0,\eta)$ holding $(x,s)$ fixed satisfy:
\begin{equation}
\begin{aligned}
\frac{\partial F}{\partial F_0}
&=\exp\!\big(-\mu_0(r)-\eta(r)xs\big)\ge 0,
\\
\frac{\partial F}{\partial \mu_0}
&=\big(1-F_0(x,s,r)\big)\exp\!\big(-\mu_0(r)-\eta(r)xs\big)\ge 0,
\\
\frac{\partial F}{\partial \eta}
&=\big(1-F_0(x,s,r)\big)\exp\!\big(-\mu_0(r)-\eta(r)xs\big)\,xs\ge 0.
\end{aligned}
\label{eq:partials_summary}
\end{equation}
A derivation follows.
Write
\[
F(x,s,r)=1-\big(1-F_0(x,s,r)\big)\exp\!\big(-\mu_0(r)-\eta(r)xs\big).
\]
Treating $F_0$, $\mu_0$, and $\eta$ as the arguments and holding $(x,s)$ fixed, differentiation with respect to $F_0$ yields
\[
\frac{\partial F}{\partial F_0}
=
\frac{\partial}{\partial F_0}\Big[1-\big(1-F_0\big)\exp\!\big(-\mu_0-\eta xs\big)\Big]
=
\exp\!\big(-\mu_0-\eta xs\big)
\ge 0.
\]
Differentiation with respect to $\mu_0$ yields
\[
\frac{\partial F}{\partial \mu_0}
=
\big(1-F_0(x,s,r)\big)\exp\!\big(-\mu_0-\eta xs\big)
\ge 0.
\]
Differentiation with respect to $\eta$ yields
\[
\frac{\partial F}{\partial \eta}
=
\big(1-F_0(x,s,r)\big)\exp\!\big(-\mu_0-\eta xs\big)\,xs
\ge 0,
\]
since $1-F_0(x,s,r)\ge 0$ and $xs\ge 0$. The dominance statement follows immediately: if $\mu_0(\hat r)\le \mu_0(r)$, $\eta(\hat r)\le \eta(r)$, and $F_0(x,s,\hat r)\le F_0(x,s,r)$ for all $(x,s)$, then monotonicity of $F$ in each argument implies $F(x,s,\hat r)\le F(x,s,r)$ for all $(x,s)$.
\hfill $\square$

\subsection{Binding codification and threshold-crossing proof}
The Results section often studies adoption along the binding codification regime $s=S(x)$. Lemma \ref{lem:binding_codification} gives a sufficient condition under which the adoption-stage choice binds at $s=S(x)$, and Proposition \ref{prop:scale_increases_lambda} then derives the comparative static of adopted scale with respect to modernization pressure along that regime, together with the induced threshold-crossing result.

\begin{lemma}[Binding codification under monotone nonincreasing objective]\label{lem:binding_codification}
Fix any $x$ and $(r,\lambda)$. Suppose that for all $s\in[S(x),1]$,
\begin{align}
\lambda\, G_s(x,s) - C_s(x,s) &\le 0, \label{eq:lem_bind_cond_1}\\
F_s(x,s,r) &\ge 0. \label{eq:lem_bind_cond_2}
\end{align}
Then $s=S(x)$ is optimal for the codification choice, i.e.
\[
S(x)\in \arg\max_{s\in[S(x),1]} U_1(x,s;r,\lambda).
\]
\end{lemma}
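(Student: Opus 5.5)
The plan is to show that, for fixed $x$, the map $s\mapsto U_1(x,s;r,\lambda)$ is weakly decreasing on the feasible interval $[S(x),1]$. The lower endpoint $S(x)$ then attains the maximum. The objective splits additively into a capacity part, $\lambda G(x,s)-C(x,s)$, and a risk part, $-\delta\Omega F(x,s,r)$, and I will handle the two pieces separately.

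\textbf{Step 1 (sign of the derivative).} Differentiating in $s$ gives
\[
\frac{\partial U_1}{\partial s}(x,s;r,\lambda)=\big(\lambda G_s(x,s)-C_s(x,s)\big)-\delta\,\Omega\,F_s(x,s,r).
\]
The first bracket is $\le 0$ by \eqref{eq:lem_bind_cond_1}. The second term is $\le 0$ because $\delta>0$, $\Omega>0$ and $F_s\ge 0$ by \eqref{eq:lem_bind_cond_2}. So $\partial U_1/\partial s\le 0$ at every $s\in[S(x),1]$.

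\textbf{Step 2 (from derivative to monotonicity).} Take any $s\in[S(x),1]$. By the fundamental theorem of calculus, or equivalently the mean value theorem,
\[
U_1(x,s;r,\lambda)-U_1(x,S(x);r,\lambda)=\int_{S(x)}^{s}\frac{\partial U_1}{\partial s}(x,t;r,\lambda)\,dt\le 0.
\]
Hence $U_1(x,S(x);r,\lambda)\ge U_1(x,s;r,\lambda)$ for all feasible $s$. Since $S(x)\in[0,1]$, the point $s=S(x)$ is itself feasible, and therefore $S(x)\in\arg\max_{s\in[S(x),1]}U_1(x,s;r,\lambda)$.

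\textbf{Main obstacle.} The only nontrivial point is regularity: the mean value argument needs $G$, $C$ and $F$ to be differentiable in $s$ on the interval. The paper implicitly treats $G$, $C$ and $F$ as smooth, since it uses $G_s$, $C_s$ and $F_{0,s}$ throughout, so I will invoke that standing assumption. If only one-sided derivatives were available, the same conclusion follows from the monotonicity theorem for functions with nonpositive Dini derivatives.

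I will also note why the hypothesis $F_s\ge0$ is the substantive one. Under the Poisson benchmark, \eqref{eq:codification_tradeoff} shows that $F_s\ge0$ is exactly $h(s)\ge 0$ on $[S(x),1]$. By Proposition~\ref{prop:codification_flip_main}, this holds whenever $S(x)\ge s^{\mathrm{flip}}(x,r)$, which links the lemma to the codification reversal.
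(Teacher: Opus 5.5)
Your proof is correct and follows the paper's argument: you show $U_{1,s}=\lambda G_s-C_s-\delta\Omega F_s\le 0$ on $[S(x),1]$ and conclude that the maximum is attained at the lower endpoint $S(x)$. Your mean-value step and regularity remark make explicit the monotonicity inference that the paper's proof asserts without comment.
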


\smallskip
\noindent Proof.
Recall that
\[
U_1(x,s;r,\lambda)= \lambda G(x,s) - C(x,s) - \delta \Omega F(x,s,r),
\]
so
\[
U_{1,s}(x,s;r,\lambda)=\lambda G_s(x,s)-C_s(x,s)-\delta\Omega F_s(x,s,r).
\]
Under \eqref{eq:lem_bind_cond_1}--\eqref{eq:lem_bind_cond_2} we have $U_{1,s}(x,s;r,\lambda)\le 0$ for all $s\in[S(x),1]$. Hence $U_1(x,s;r,\lambda)$ is weakly decreasing in $s$ on $[S(x),1]$, so the maximum on this interval is attained at the lower endpoint. In particular, $s=S(x)$ is optimal.

\smallskip
\noindent Proof of Proposition \ref{prop:scale_increases_lambda}.
Fix $r$ and work along the binding codification regime $s=S(x)$.

\smallskip
\noindent
Part (i).
Define
\[
U_1(x,S(x);r,\lambda)=\lambda\,G(x,S(x))-C(x,S(x))-\delta\,\Omega\,F(x,S(x),r).
\]
By assumption, an interior maximizer $x^*(\lambda;r)$ exists and $U_1(\cdot,S(\cdot);r,\lambda)$ is strictly concave in $x$ over the relevant range, so the maximizer is unique and satisfies the first-order condition
\[
\frac{\partial U_1}{\partial x}(x^*(\lambda;r),S(x^*(\lambda;r));r,\lambda)=0.
\]
Differentiate this condition with respect to $\lambda$. By the implicit function theorem, the derivatives satisfy
\[
\frac{\partial^2 U_1}{\partial x^2}(x^*(\lambda;r),S(x^*(\lambda;r));r,\lambda)\cdot \frac{d x^*(\lambda;r)}{d\lambda}
+
\frac{\partial^2 U_1}{\partial x\,\partial \lambda}(x^*(\lambda;r),S(x^*(\lambda;r));r,\lambda)
=0.
\]
Because $\lambda$ enters only through the term $\lambda\,G(x,S(x))$, we have
\[
\frac{\partial^2 U_1}{\partial x\,\partial \lambda}(x,S(x);r,\lambda)=\frac{d}{dx}G(x,S(x))>0.
\]
Here $\frac{d}{dx}G(x,S(x))=G_x(x,S(x))+G_s(x,S(x))S'(x)$, which is strictly positive under the maintained assumptions $G_x>0$, $G_s\ge 0$, and $S'(x)\ge 0$.
Strict concavity implies $\partial^2 U_1/\partial x^2<0$ at the interior optimum. Therefore,
\[
\frac{d x^*(\lambda;r)}{d\lambda}
=
-\frac{\frac{d}{dx}G(x^*(\lambda;r),S(x^*(\lambda;r)))}{\frac{\partial^2 U_1}{\partial x^2}(x^*(\lambda;r),S(x^*(\lambda;r));r,\lambda)}
>0,
\]
which establishes that $x^*(\lambda;r)$ is strictly increasing in $\lambda$ over the region where the interior solution applies.

\smallskip
\noindent
Part (ii).
Let $\tilde\mu(x;r)\equiv \mu_0(r)+\eta(r)\,xS(x)$, so that
\[
\zeta(\lambda)\equiv p_{\mathrm{wf}}(x^*(\lambda;r),S(x^*(\lambda;r));r)
=1-\exp\!\big(-\tilde\mu(x^*(\lambda;r);r)\big).
\]
Under the stated assumptions, $\tilde\mu(\cdot;r)$ is continuous and weakly increasing on $[0,\bar x]$, and $x^*(\lambda;r)$ is continuous and increasing in $\lambda$ (from part (i)), hence $\zeta(\lambda)$ is continuous and weakly increasing in $\lambda$.

Define the critical scale as the minimal scale that attains the within-form threshold:
\[
x^{\mathrm{crit}}(\bar p;r)\;\equiv\;\inf\big\{x\in[0,\bar x] : p_{\mathrm{wf}}(x,S(x);r)\ge \bar p\big\}.
\]
By continuity of $p_{\mathrm{wf}}(x,S(x);r)$ in $x$, we have
\[
p_{\mathrm{wf}}(x,S(x);r)<\bar p\ \text{ for all } x<x^{\mathrm{crit}}(\bar p;r),
\qquad
p_{\mathrm{wf}}(x,S(x);r)\ge \bar p\ \text{ for all } x\ge x^{\mathrm{crit}}(\bar p;r),
\]
and in particular $p_{\mathrm{wf}}(x^{\mathrm{crit}}(\bar p;r),S(x^{\mathrm{crit}}(\bar p;r));r)=\bar p$.

Now suppose there exist $\lambda_L<\lambda_H$ such that
\[
x^*(\lambda_L;r)<x^{\mathrm{crit}}(\bar p;r)<x^*(\lambda_H;r).
\]
Then $\zeta(\lambda_L)<\bar p$ and $\zeta(\lambda_H)\ge \bar p$. Since $\zeta(\lambda)$ is continuous, the intermediate value theorem implies that there exists $\lambda^{\mathrm{crit}}\in(\lambda_L,\lambda_H]$ such that $\zeta(\lambda^{\mathrm{crit}})=\bar p$. Moreover, since $\zeta$ is weakly increasing, any such $\lambda^{\mathrm{crit}}$ is a cutoff in the sense that $\zeta(\lambda)\le \bar p$ for $\lambda\le \lambda^{\mathrm{crit}}$ and $\zeta(\lambda)\ge \bar p$ for $\lambda\ge \lambda^{\mathrm{crit}}$.
The exact expression for total failure at $\lambda^{\mathrm{crit}}$ follows from \eqref{eq:failure_decomp_main}:
\[
F=F_0+\big(1-F_0\big)\bar p.
\]
\hfill $\square$

\bigskip
\noindent\textbf{Microfoundations and Interpretive Support}
\par\smallskip

\section{Microfoundation: Within-Form Success and Remedy-as-Persistence}
\label{app:wf_microfoundation}
This section microfounds the within-form term used in the main text. It provides a Poisson benchmark for \eqref{eq:wf_prob_main} and highlights a central institutional point: remedies reduce within-form success primarily by shortening persistence windows rather than by mechanically preventing attempts.

Normalize the relevant consolidation window to length one. Consider safeguard levers $r=(r_m,r_\kappa,r_q)$. Let $M(r_m)\ge 0$ capture effective unilateral throughput and iteration capacity, let $\kappa(r_\kappa)\in[0,1]$ capture the strength of test-to-deployment coupling, and let $q(r_q)>0$ denote a remedy hazard that increases with contestation and reversal capacity. Let $\psi_0(q)\in(0,1)$ denote the probability that a baseline within-form move persists long enough to matter, and let $\psi(q)\in(0,1)$ denote the analogous persistence probability for standardized within-form moves.

We assume the safeguard levers shift these objects in the natural directions:
\[
M'(r_m)<0,\qquad \kappa'(r_\kappa)<0,\qquad q'(r_q)>0,
\qquad
\psi_0'(q)<0,\qquad \psi'(q)<0.
\]
Define a baseline intensity
\[
\mu_0(r)\equiv M(r_m)\psi_0\!\big(q(r_q)\big),
\]
and a standardized intensity coefficient
\[
\eta(r)\equiv M(r_m)\kappa(r_\kappa)\psi\!\big(q(r_q)\big).
\]
Interpretation: safeguards reduce $\mu_0(r)$ by limiting throughput and speeding remedies, and they reduce $\eta(r)$ by additionally decoupling testing from deployment.

The $xs$ structure in the standardized component has a simple institutional interpretation. Scale $x$ increases the number of routinized decisions routed through the review process, which creates more opportunities to propose and revise contested actions. Codification $s$ makes approval criteria more stable and transferable across settings, which raises the expected value of each probe and makes what is learned in one domain usable in others. Under an independence approximation across routinized decision points, an arrival process for effective standardized moves is well approximated by a Poisson process with mean proportional to $xs$.

In many settings, remedy and contestation processes have minimum time requirements (due process, evidentiary standards, and adjudicatory capacity), which can be represented as an upper bound $q(r_q)\le \bar q<\infty$ on the remedy hazard achievable within the window. Similarly, even strong separation of duties may not eliminate executive probing entirely, which can be represented as a lower bound $M(r_m)\ge \underline m>0$. Finally, complete decoupling may be infeasible because evaluation must remain meaningfully predictive of enforcement to be administrable and legitimate, which can be represented as a lower bound $\kappa(r_\kappa)\ge \underline\kappa>0$. Under these speed-limit conditions, $\psi_0(q(r_q))\ge \psi_0(\bar q)>0$ and $\psi(q(r_q))\ge \psi(\bar q)>0$, so both indices are bounded away from zero:
\[
\mu_0(r)\ge \underline m\,\psi_0(\bar q)\equiv \underline \mu_0>0,
\qquad
\eta(r)\ge \underline m\,\underline\kappa\,\psi(\bar q)\equiv \underline \eta>0.
\]

One can make the ``mean proportional to $xs$'' claim slightly less tautological with a toy iteration model. Suppose that, in the consolidation window, there are $n(x)$ contested decision points routed through a standardized review process, with $n(x)$ increasing in $x$. At each routed decision point, an autocratic executive can attempt $M(r_m)$ procedurally plausible variants. Let $\rho(s,r_\kappa)\in(0,1)$ denote the probability that a proposed variant both satisfies the approval criteria and is substantively erosive, and let $\psi\!\big(q(r_q)\big)\in(0,1)$ denote the probability that a passing erosive move persists long enough to matter. Then the per-attempt effective probability is $\rho(s,r_\kappa)\psi\!\big(q(r_q)\big)$, and the number of effective standardized moves is binomial with mean $n(x)M(r_m)\rho(s,r_\kappa)\psi\!\big(q(r_q)\big)$. In regimes with many attempts and small per-attempt success probability, a Poisson approximation implies an effective arrival rate proportional to $n(x)M(r_m)\rho(s,r_\kappa)\psi\!\big(q(r_q)\big)$. If $n(x)$ is approximately linear in scale over the relevant range and $\rho(s,r_\kappa)$ is approximately linear in codification (e.g., $\rho(s,r_\kappa)\approx s\,\kappa(r_\kappa)$ as a local first-order approximation), then this yields a mean of the form $\eta(r)xs$ as a tractable reduced form.

Let the number of effective baseline moves be $N_0\sim\mathrm{Poisson}(\mu_0(r))$. Let the number of effective standardized moves be $N_{\mathrm{std}}\sim\mathrm{Poisson}(\eta(r)xs)$, reflecting that scale creates more opportunities for procedurally passing moves while codification makes successful tactics easier to reuse. If $N_0$ and $N_{\mathrm{std}}$ are independent, then $N\equiv N_0+N_{\mathrm{std}}\sim\mathrm{Poisson}(\mu_0(r)+\eta(r)xs)=\mathrm{Poisson}(\mu(x,s,r))$. Therefore,
\[
p_{\mathrm{wf}}(x,s;r)=\Pr[N\ge 1]=1-\exp\!\big(-\mu(x,s,r)\big),
\]
which matches \eqref{eq:wf_prob_main}.

A cumulative notion of erosion can be incorporated without changing the monotone logic. If regime-relevant within-form erosion requires at least $k\ge 1$ effective moves within the window, then with $N\sim\mathrm{Poisson}(\mu(x,s,r))$,
\[
p_{\mathrm{wf}}^{(k)}(x,s;r)=\Pr[N\ge k]
=
1-\exp\!\big(-\mu(x,s,r)\big)\sum_{j=0}^{k-1}\frac{\big(\mu(x,s,r)\big)^j}{j!}.
\]
This variant remains increasing in $x$ and decreasing in safeguards whenever $\mu_{0,r_j}(r)\le 0$ and $\eta_{r_j}(r)\le 0$.
For any target threshold $\bar p\in(0,1)$, define the corresponding intensity cutoff $\tau_k(\bar p)$ as the smallest $\mu$ such that $p_{\mathrm{wf}}^{(k)}(\mu)\ge \bar p$.
Then the threshold characterization for alignment-surface exploitability continues to hold with the same index $\mu(x,s,r)=\mu_0(r)+\eta(r)xs$, replacing the Poisson cutoff $-\ln(1-\bar p)$ by $\tau_k(\bar p)$.

\section{Interpretive Note: Asymmetric Access, Boundary Learning, and Ambiguity Reduction}
\label{app:alt_microfoundation}
This section is an interpretive clarification rather than a new core result.
The main text derives within-form success from a stylized search environment. This short note adds an institutional interpretation and records how ambiguity reduction can be incorporated without changing the basic mechanism.

The key asymmetry is organizational rather than computational. In many settings, the executive (or political deployer) participates directly in proposal-and-revision against the review process, while oversight bodies operate through distinct mandates, staffing constraints, and procedures. As a result, a standardization investment can simultaneously increase oversight-facing auditability and increase deployer-facing learnability, because these are different channels. In the model, this asymmetry is captured by the fact that the executive's effective per-attempt probability of discovering an erosive-but-passing move, $\rho(s,r_\kappa)$, and the number of attempts it can run, $M(r_m)$, need not be available to external oversight at comparable speed or resolution. Appendix \ref{app:codification_split} formalizes the wedge by separating insider-facing standardization from oversight-facing auditability.

Codification also can reduce within-form vulnerability by shrinking the effective ambiguity set, for example by clarifying doctrine or reducing discretionary degrees of freedom. The framework is compatible with this possibility. Replace the standardized component of within-form intensity with
\[
\mu(x,s,r)=\mu_0(r)+\eta(r)\,x\,s\cdot \omega(s),
\]
where $\omega(s)\in(0,1]$ captures the effective mass of contested cases that remain exploitable after operationalization, with $\omega'(s)\le 0$. Then codification raises within-form vulnerability whenever the learnability term grows faster than ambiguity shrinks, i.e., whenever $\frac{d}{ds}[s\,\omega(s)]\ge 0$ over the relevant range. The main text treats $\omega(s)$ as approximately constant because administrative procedures can stabilize approval criteria without settling underlying constitutional contestation; this note records how to incorporate ambiguity reduction if desired.

\bigskip
\noindent\textbf{Codification and Persistence Extensions}
\par\smallskip

\section{Codification Result: Cutoff, Proof, and Illustration}
\label{app:robustness_codification}
This section provides supporting codification results.
It first states the benchmark cutoff for when marginal codification increases total failure.
It then gives the proof and a minimal illustration for Proposition \ref{prop:codification_flip_main} from the main text.

Let codification be $s\in[0,1]$, and let within-form success be
\[
p_{\mathrm{wf}}(x,s;r)=1-\exp\!\big(-\mu_0(r)-\eta(r)xs\big).
\]
Let the overt-abuse component be $F_0(x,s,r)$ with $F_{0,s}(x,s,r)<0$ and $F_{0,r_j}(x,s,r)\le 0$ for each safeguard component $r_j$, reflecting that codified compliance deters clearly impermissible actions and that safeguards strengthen enforcement and remedies. The sign of $F_{0,x}$ is not restricted and is irrelevant for the codification results in this section. Total failure under an autocrat is
\[
F(x,s,r)=1-\big(1-F_0(x,s,r)\big)\exp\!\big(-\mu_0(r)-\eta(r)xs\big).
\]

To motivate the one-crossing flip, suppose codification deters overt abuse with diminishing marginal returns. As a reduced-form microfoundation, let an overtly impermissible attempt succeed only if it is not detected and enjoined within the relevant window. Let codification raise a detection/enforcement hazard $d(s)$ with $d'(s)>0$ and $d''(s)\le 0$. If the survival probability of an overtly impermissible attempt is $\exp(-d(s))$, then for some baseline $\bar F(x,r)\in(0,1)$ we can write $F_0(x,s,r)=\bar F(x,r)\exp(-d(s))$. This implies $F_{0,s}\le 0$ and $F_{0,ss}\ge 0$ whenever $d'(s)>0$ and $d''(s)\le 0$, since $F_{0,ss}=\bar F\exp(-d)\big(d'(s)^2-d''(s)\big)$.

\begin{proposition}[Codification tradeoff and a structural cutoff]
\label{prop:codification_cutoff_appendix}
Holding $(x,r)$ fixed,
\[
\frac{\partial F}{\partial s}
=
\exp\!\big(-\mu_0(r)-\eta(r)xs\big)\Big(F_{0,s}(x,s,r)+\big(1-F_0(x,s,r)\big)\eta(r)x\Big).
\]
Thus codification increases failure risk locally if and only if
\[
\big(1-F_0(x,s,r)\big)\eta(r)x > -F_{0,s}(x,s,r).
\]
\end{proposition}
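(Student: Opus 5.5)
The plan is a direct differentiation of the closed form for total failure under the Poisson benchmark, followed by a sign argument. Write
\[
F(x,s,r)=1-\big(1-F_0(x,s,r)\big)E(s),\qquad E(s)\equiv\exp\!\big(-\mu_0(r)-\eta(r)xs\big),
\]
treating $(x,r)$ as fixed parameters. Since $\mu_0(r)$ does not depend on $s$, the only $s$-dependence inside the exponential comes through the linear term $\eta(r)xs$. Therefore $E'(s)=-\eta(r)x\,E(s)$.

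The first step is the product rule. It gives
\[
\frac{\partial F}{\partial s}=F_{0,s}\,E(s)-\big(1-F_0\big)E'(s)=F_{0,s}\,E(s)+\big(1-F_0\big)\eta(r)x\,E(s).
\]
Factoring out $E(s)$ yields the displayed expression, and in particular recovers \eqref{eq:codification_tradeoff} from the main text. This requires only that $F_0$ be differentiable in $s$, which is already maintained, since the paper works with $F_{0,s}$ throughout.

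The second step is the sign characterization. Because $\exp(\cdot)>0$ for every finite argument, $E(s)>0$, so the sign of $\partial F/\partial s$ coincides with the sign of
\[
h(s)=F_{0,s}+(1-F_0)\eta(r)x.
\]
Hence $\partial F/\partial s>0$ if and only if $h(s)>0$. Moving $F_{0,s}$ to the right-hand side gives exactly the stated inequality $(1-F_0)\eta(r)x>-F_{0,s}$. Both directions of the equivalence are immediate, because dividing by a strictly positive quantity preserves strict inequalities.

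I expect no genuine obstacle. The one point to state carefully is that "increases failure risk locally" is read as a strictly positive partial derivative at the given $s$. Under that reading the "if and only if" is exact, with no boundary or degenerate cases to handle. At $x=0$ the condition reduces to $0>-F_{0,s}$, which fails under $F_{0,s}<0$; this is consistent with the statement and needs no separate treatment.
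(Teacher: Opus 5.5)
Your proposal is correct and follows the paper's own proof essentially step for step. Both differentiate the closed form $F=1-(1-F_0)\exp(-\mu_0-\eta xs)$ in $s$, factor out the strictly positive exponential, and read off the sign condition.
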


Proof of Proposition \ref{prop:codification_cutoff_appendix}.
Differentiate
\[
F(x,s,r)=1-\big(1-F_0(x,s,r)\big)\exp\!\big(-\mu_0(r)-\eta(r)xs\big)
\]
with respect to $s$ holding $(x,r)$ fixed:
\[
\frac{\partial F}{\partial s}
=
-\Big[-F_{0,s}\exp\!\big(-\mu_0-\eta xs\big)
+
\big(1-F_0\big)\exp\!\big(-\mu_0-\eta xs\big)\big(-\eta x\big)\Big].
\]
Rearranging yields
\[
\frac{\partial F}{\partial s}
=
\exp\!\big(-\mu_0-\eta xs\big)\Big(F_{0,s}+\big(1-F_0\big)\eta x\Big),
\]
and the exponential factor is strictly positive, so the sign condition follows.
\hfill $\square$

Proof of Proposition \ref{prop:codification_flip_main}.
From Proposition \ref{prop:codification_cutoff_appendix},
\[
\frac{\partial F}{\partial s}
=
\exp\!\big(-\mu_0(r)-\eta(r)xs\big)\,h(s),
\]
and the exponential term is strictly positive, so the sign of $\partial F/\partial s$ equals the sign of $h(s)$. Differentiate $h(s)$:
\[
h'(s)=F_{0,ss}(x,s,r)-\eta(r)x\,F_{0,s}(x,s,r).
\]
If $F_{0,ss}(x,s,r)\ge 0$ and $F_{0,s}(x,s,r)\le 0$, then $h'(s)\ge 0$, so $h(s)$ is weakly increasing and crosses zero at most once. If $h(0)<0<h(1)$, continuity implies existence of at least one root in $(0,1)$. If, in addition, $x>0$ and $F_{0,s}(x,s,r)<0$ for interior $s$, then $h'(s)>0$ and the root $s^{\mathrm{flip}}\in(0,1)$ is unique, yielding the sign characterization.
\hfill $\square$

\smallskip
\noindent
The cutoff has a direct institutional interpretation. Codification reduces overt risk through $F_{0,s}<0$, but it can also increase within-form risk by making approval criteria stable enough for legal iteration to be productive. The condition is therefore more likely to reverse when scale is high, when safeguards are weak (so $\eta(r)$ is large), and when the marginal overt-deterrence gain from codification is small.

\smallskip
\noindent
Minimal illustration.
A simple overt-abuse channel that satisfies $F_{0,s}\le 0$ and $F_{0,ss}\ge 0$ is
\[
F_0(x,s,r)=\bar F(x,r)\,e^{-bs},
\qquad
b>0,
\qquad
\bar F(x,r)\in(0,1).
\]
In this case,
\[
h(s)=\eta(r)x-(\eta(r)x+b)\bar F(x,r)e^{-bs},
\]
so the unique flip point (when it lies in $(0,1)$) is
\[
s^{\mathrm{flip}}(x,r)=\frac{1}{b}\ln\!\left(\frac{(\eta(r)x+b)\bar F(x,r)}{\eta(r)x}\right).
\]

\section{Codification Split: Separating Standardization from Auditability}
\label{app:codification_split}
The main text uses a single codification index $s$ to capture the empirically common case in which administrative modernization raises both oversight-facing auditability and insider-facing boundary stability through the same investments in standardization, documentation, logging, and evaluation. This section unbundles that adoption-stage reform margin into separate components. The point is to show which part of the bundle drives each effect in the benchmark and to supply the notation used in the narrower post-crisis repair extension in Section~\ref{sec:results}.

The split between $s_{\mathrm{std}}$ and $s_{\mathrm{aud}}$ can also be microfounded by the need to operationalize probabilistic outputs.
Suppose each action $a$ generates a compliance score $z(a)\in[0,1]$.
The institution chooses an operationalization protocol $(\tau_L,\tau_H,\ell)$, where $\tau_L\le \tau_H$ define the automatic Flag/Pass regions and $\ell$ indexes logging and recordkeeping depth.
Let $m\equiv \tau_H-\tau_L$ denote the width of the uncertainty band.
Interpret narrower bands as greater insider-facing standardization and richer logging as greater oversight-facing auditability:
\[
s_{\mathrm{std}} = g(1-m), \qquad s_{\mathrm{aud}} = h(\ell),
\]
with $g'>0$ and $h'>0$.
The rest of this section studies how those two margins enter total failure.

Let $s_{\mathrm{std}}\in[0,1]$ capture stability and transferability of the Pass/Flag boundary for an insider with access to feedback, and let $s_{\mathrm{aud}}\in[0,1]$ capture auditability and traceability that help oversight detect and contest overt violations. Let within-form intensity be
\[
\mu(x,s_{\mathrm{std}},r)=\mu_0(r)+\eta(r)\,x\,s_{\mathrm{std}},
\qquad
p_{\mathrm{wf}}(x,s_{\mathrm{std}};r)=1-\exp\!\big(-\mu(x,s_{\mathrm{std}},r)\big),
\]
and let overt vulnerability depend only on auditability,
\[
F_0=F_0(x,s_{\mathrm{aud}},r),
\qquad
F_{0,s_{\mathrm{aud}}}\le 0.
\]
Total failure is
\[
F(x,s_{\mathrm{std}},s_{\mathrm{aud}},r)
=
p_{\mathrm{wf}}(x,s_{\mathrm{std}};r)+\big(1-p_{\mathrm{wf}}(x,s_{\mathrm{std}};r)\big)F_0(x,s_{\mathrm{aud}},r).
\]
Then the two codification components have cleanly signed marginal effects:
\[
\frac{\partial F}{\partial s_{\mathrm{aud}}}
=
\big(1-p_{\mathrm{wf}}(x,s_{\mathrm{std}};r)\big)\,F_{0,s_{\mathrm{aud}}}(x,s_{\mathrm{aud}},r)\le 0,
\]
and, under the Poisson benchmark,
\[
\frac{\partial F}{\partial s_{\mathrm{std}}}
=
\big(1-F_0(x,s_{\mathrm{aud}},r)\big)\exp\!\big(-\mu(x,s_{\mathrm{std}},r)\big)\,\eta(r)x\ge 0,
\]
with strict inequality when $x>0$, $\eta(r)>0$, and $F_0<1$. In this split formulation there is no flip: raising auditability reduces failure risk, while raising insider-facing standardization increases failure risk.

Proposition \ref{prop:codification_flip_main} can be read as a reduced-form statement about a single investment margin that jointly raises both components. Formally, suppose a scalar codification investment $z$ induces $s_{\mathrm{aud}}=h_{\mathrm{aud}}(z)$ and $s_{\mathrm{std}}=h_{\mathrm{std}}(z)$ with $h'_{\mathrm{aud}},h'_{\mathrm{std}}>0$. Then
\[
\frac{d}{dz}F(x,h_{\mathrm{std}}(z),h_{\mathrm{aud}}(z),r)
=
\frac{\partial F}{\partial s_{\mathrm{aud}}}h'_{\mathrm{aud}}(z)
+
\frac{\partial F}{\partial s_{\mathrm{std}}}h'_{\mathrm{std}}(z).
\]
If the marginal enforcement benefit $-(\partial F/\partial s_{\mathrm{aud}})$ declines with $z$ while the marginal within-form cost $(\partial F/\partial s_{\mathrm{std}})$ is approximately constant or rises over the relevant range, then the total derivative can change sign once, yielding the one-crossing double-edged pattern in $z$. This split also motivates an empirically and institutionally important wedge emphasized in the main text: reforms that increase auditability for oversight without increasing deployer-facing iterability and transferability are unambiguously protective in this benchmark.

\section{Persistence Extension: Temporary Pressure and Incomplete Post-Crisis Unwinding}
\label{app:post_crisis_unwinding}

This section is a downstream extension built on the split notation from Section~\ref{app:codification_split}.
It studies the case in which installed scale and auditability are inherited from a temporary high-pressure episode, while post-crisis repair can operate only by reducing insider-facing standardization.
The key question is whether a democratic government unwinds enough of that inherited standardization to return below the within-form concern threshold.
Proposition \ref{prop:post_crisis_unwinding} shows that repair can be positive yet incomplete.

Fix an installed scale $x>0$, inherited safeguards $r$, and a concern threshold $\bar p\in(0,1)$. 
Define the critical level of insider-facing standardization by
\[
s_{\mathrm{std}}^{\mathrm{crit}}(x;r,\bar p)
\equiv
\left[
\frac{-\ln(1-\bar p)-\mu_0(r)}{\eta(r)x}
\right]_{[0,1]},
\]
where $[\cdot]_{[0,1]}$ truncates to the interval $[0,1]$. 
Then
\[
p_{\mathrm{wf}}(x,s_{\mathrm{std}};r)\le \bar p
\quad \Longleftrightarrow \quad
s_{\mathrm{std}}\le s_{\mathrm{std}}^{\mathrm{crit}}(x;r,\bar p).
\]
If
\[
\mu_0(r)\ge -\ln(1-\bar p),
\]
baseline within-form vulnerability already exceeds the threshold, so lowering insider-facing standardization cannot by itself return the polity to the low-risk region.

Suppose a temporary high-pressure episode leaves inherited state $(x_H,s_{\mathrm{aud},H},s_{\mathrm{std},H})$. 
Let the amount of unwinding needed to return below the threshold be
\[
g_H
\equiv
\left(s_{\mathrm{std},H}-s_{\mathrm{std}}^{\mathrm{crit}}(x_H;r,\bar p)\right)_+,
\]
where $(\cdot)_+$ denotes the positive part. 
When $g_H>0$, the inherited crisis architecture lies above the within-form threshold.

After pressure subsides to $\lambda_L$, let
\[
B(x,s_{\mathrm{aud}},s_{\mathrm{std}};\lambda)
\equiv
\lambda G(x,s_{\mathrm{aud}},s_{\mathrm{std}})-C(x,s_{\mathrm{aud}},s_{\mathrm{std}})
\]
denote ordinary-governance value net of operating cost under the split notation. 
The post-crisis democratic government chooses refactoring effort $u\in[0,s_{\mathrm{std},H}]$, where post-repair insider-facing standardization is $s_{\mathrm{std},1}=s_{\mathrm{std},H}-u$. 
Its problem is
\[
\max_{u\in[0,s_{\mathrm{std},H}]}
W(u)
=
B(x_H,s_{\mathrm{aud},H},s_{\mathrm{std},H}-u;\lambda_L)
-
K(u;x_H)
-
\delta \Omega F(x_H,s_{\mathrm{std},H}-u,s_{\mathrm{aud},H},r).
\]
Suppose refactoring costs satisfy
\[
K_u(u;x)>0,\qquad K_{uu}(u;x)\ge 0,\qquad K_{ux}(u;x)>0.
\]
These conditions say that repair is costly, marginal repair costs are weakly increasing, and larger inherited systems are harder to decouple.

Define the marginal benefit of lowering insider-facing standardization by
\[
\Delta(x,s_{\mathrm{aud}},s_{\mathrm{std}},r;\lambda)
\equiv
\delta \Omega F_{s_{\mathrm{std}}}(x,s_{\mathrm{std}},s_{\mathrm{aud}},r)
-
B_{s_{\mathrm{std}}}(x,s_{\mathrm{aud}},s_{\mathrm{std}};\lambda).
\]
Then
\[
W_u(u)
=
\Delta(x_H,s_{\mathrm{aud},H},s_{\mathrm{std},H}-u,r;\lambda_L)-K_u(u;x_H).
\]
Under the split benchmark,
\[
F_{s_{\mathrm{std}}}(x,s_{\mathrm{std}},s_{\mathrm{aud}},r)
=
\big(1-F_0(x,s_{\mathrm{aud}},r)\big)
\exp\!\big(-\mu_0(r)-\eta(r)x s_{\mathrm{std}}\big)\eta(r)x,
\]
so the political value of refactoring is shaped by installed scale, but it falls as insider-facing standardization is unwound.

\begin{proposition}[Temporary pressure and incomplete post-crisis unwinding]
\label{prop:post_crisis_unwinding}
Fix inherited safeguards $r$, a low-pressure environment $\lambda_L$, and a concern threshold $\bar p\in(0,1)$. 
Suppose a temporary high-pressure episode leaves inherited state $(x_H,s_{\mathrm{aud},H},s_{\mathrm{std},H})$ with
\[
s_{\mathrm{std},H}>s_{\mathrm{std}}^{\mathrm{crit}}(x_H;r,\bar p),
\]
so that $g_H>0$. 
Suppose also that
\[
\Delta(x_H,s_{\mathrm{aud},H},s_{\mathrm{std},H},r;\lambda_L)>K_u(0;x_H)
\]
and
\[
K_u(g_H;x_H)>
\sup_{s_{\mathrm{std}}\in[0,s_{\mathrm{std}}^{\mathrm{crit}}(x_H;r,\bar p)]}
\Delta(x_H,s_{\mathrm{aud},H},s_{\mathrm{std}},r;\lambda_L).
\]
Then every optimizer $u^*$ of the repair problem satisfies
\[
0<u^*<g_H.
\]
Hence post-crisis repair is positive but incomplete:
\[
s_{\mathrm{std},H}-u^*>s_{\mathrm{std}}^{\mathrm{crit}}(x_H;r,\bar p),
\]
and therefore
\[
p_{\mathrm{wf}}(x_H,s_{\mathrm{std},H}-u^*;r)>\bar p.
\]
\end{proposition}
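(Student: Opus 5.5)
The plan is to argue directly from the marginal condition $W_u(u)=\Delta(x_H,s_{\mathrm{aud},H},s_{\mathrm{std},H}-u,r;\lambda_L)-K_u(u;x_H)$. I treat $W$ as continuously differentiable on the compact interval $[0,s_{\mathrm{std},H}]$, so a maximizer exists. I then rule out the two failure modes separately: no repair ($u^*=0$) and repair reaching the threshold ($u^*\ge g_H$).

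For the lower bound, the first hypothesis gives $W_u(0)=\Delta(x_H,s_{\mathrm{aud},H},s_{\mathrm{std},H},r;\lambda_L)-K_u(0;x_H)>0$. By continuity, $W(\varepsilon)>W(0)$ for small $\varepsilon>0$, so $u=0$ cannot be a maximizer and every $u^*>0$. This step is routine.

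For the upper bound, I would show that $W$ is strictly decreasing on $[g_H,s_{\mathrm{std},H}]$. Take any $u\ge g_H$. Post-repair standardization $s=s_{\mathrm{std},H}-u$ then satisfies $s\le s_{\mathrm{std},H}-g_H=s_{\mathrm{std}}^{\mathrm{crit}}(x_H;r,\bar p)$. Because $g_H>0$, the positive part in the definition of $g_H$ is the difference itself, so this equality is exact. Hence $s\in[0,s_{\mathrm{std}}^{\mathrm{crit}}]$ and
\[
\Delta(x_H,s_{\mathrm{aud},H},s,r;\lambda_L)\le \sup_{s'\in[0,s_{\mathrm{std}}^{\mathrm{crit}}]}\Delta(x_H,s_{\mathrm{aud},H},s',r;\lambda_L)<K_u(g_H;x_H)\le K_u(u;x_H),
\]
where the last inequality uses $K_{uu}\ge 0$. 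So $W_u(u)<0$ throughout $[g_H,s_{\mathrm{std},H}]$. By the mean value theorem, $W(u)<W(g_H)$ for all $u>g_H$.

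It remains to exclude $u^*=g_H$ itself. Since $W_u(g_H)<0$ and $g_H>0$, we have $W(g_H-\varepsilon)>W(g_H)$ for small $\varepsilon>0$. So no optimizer lies in $[g_H,s_{\mathrm{std},H}]$, and $u^*<g_H$. I expect this boundary handling, using strictness of the derivative at $g_H$ together with interiority of $g_H$ in the domain, to be the only delicate point. Note that $g_H\le s_{\mathrm{std},H}$ because $s_{\mathrm{std}}^{\mathrm{crit}}\ge0$.

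Finally, $u^*<g_H$ gives $s_{\mathrm{std},H}-u^*>s_{\mathrm{std}}^{\mathrm{crit}}(x_H;r,\bar p)$. The conclusion $p_{\mathrm{wf}}>\bar p$ follows from the equivalence $p_{\mathrm{wf}}\le\bar p\iff s_{\mathrm{std}}\le s_{\mathrm{std}}^{\mathrm{crit}}$ stated just before the proposition.

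One caveat on that equivalence: when truncation at $1$ binds, $s_{\mathrm{std}}^{\mathrm{crit}}=1$ and $g_H>0$ is impossible, so that case is vacuous. When truncation at $0$ binds, $\mu_0\ge-\ln(1-\bar p)$ and $p_{\mathrm{wf}}\ge\bar p$ holds trivially. The strict inequality in that case needs $\mu_0>-\ln(1-\bar p)$ or $x_H s_{\mathrm{std}}>0$; the latter holds since $s_{\mathrm{std},H}-u^*>0$ and $\eta(r)x_H>0$. In the untruncated case, strict monotonicity of $\mu$ in $s_{\mathrm{std}}$ (from $\eta(r)x_H>0$) delivers the strict inequality directly.
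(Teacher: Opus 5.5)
Your proposal is correct and follows essentially the same route as the paper. $W_u(0)>0$ rules out zero repair, and the sup bound on $\Delta$ over $[0,s_{\mathrm{std}}^{\mathrm{crit}}]$ together with weakly increasing $K_u$ gives $W_u<0$ on $[g_H,s_{\mathrm{std},H}]$. You also make explicit two details the paper leaves implicit: the exclusion of the endpoint $u=g_H$ (via $W_u(g_H)<0$ with $g_H>0$), and the truncation in $s_{\mathrm{std}}^{\mathrm{crit}}$, where the stated equivalence can fail at $s_{\mathrm{std}}=0$ but the direction you need still holds.
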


\noindent Proof.
The first inequality implies $W_u(0)>0$, so zero repair cannot be optimal. 
Now fix any $u\ge g_H$. 
Then $s_{\mathrm{std},H}-u\in[0,s_{\mathrm{std}}^{\mathrm{crit}}(x_H;r,\bar p)]$, so
\[
\Delta(x_H,s_{\mathrm{aud},H},s_{\mathrm{std},H}-u,r;\lambda_L)
\le
\sup_{s_{\mathrm{std}}\in[0,s_{\mathrm{std}}^{\mathrm{crit}}(x_H;r,\bar p)]}
\Delta(x_H,s_{\mathrm{aud},H},s_{\mathrm{std}},r;\lambda_L).
\]
Because $K_u(\cdot;x_H)$ is weakly increasing,
\[
K_u(u;x_H)\ge K_u(g_H;x_H)
\qquad \text{for all } u\ge g_H.
\]
Combining these inequalities with the second displayed condition implies $W_u(u)<0$ for all $u\ge g_H$. 
Therefore $W(u)$ is strictly decreasing on $[g_H,s_{\mathrm{std},H}]$, so no optimizer can lie at or beyond $g_H$. 
Since zero repair is not optimal, every optimizer must satisfy $0<u^*<g_H$. 
The threshold claim follows immediately from the definition of $g_H$.
\hfill $\square$

A convenient special case is quadratic refactoring cost:
\[
K(u;x)=\frac{\kappa}{2}(1+\phi x)u^2,
\qquad
\kappa,\phi>0.
\]
Then $K_u(u;x)=\kappa(1+\phi x)u$, so larger inherited systems are mechanically harder to unwind at every point. 
Holding the distance to the threshold fixed, incomplete unwinding becomes easier to sustain as installed scale grows.

\bigskip
\noindent\textbf{Robustness and Illustration}
\par\smallskip

\section{Robustness: Alternative aggregation and strategic choice}
\label{app:alt_aggregation}
The benchmark total-failure form $F=p_{\mathrm{wf}}+(1-p_{\mathrm{wf}})F_0$ can be read as the probability that at least one of two failure channels succeeds within the consolidation window. This is natural when an autocrat can attempt both within-form and overt moves, and failure occurs if either channel produces regime-relevant democratic breakdown. It can also be interpreted as a tractable separable benchmark that isolates the paper's main mechanism: how the architecture shifts within-form success $p_{\mathrm{wf}}$ and overt success $F_0$.

If instead an autocrat chooses a channel strategically, a simple alternative benchmark is
\[
F^{\max}(x,s,r)\equiv \max\!\big\{F_0(x,s,r),\,p_{\mathrm{wf}}(x,s;r)\big\}.
\]
This choice-based formulation says the autocrat concentrates effort on the more promising channel rather than pursuing both in parallel. It therefore differs from the separable benchmark by ruling out the extra probability mass that comes from attempting both channels within the same window.

The qualitative comparative statics used in the main text continue to hold under $F^{\max}$. The aggregator $F^{\max}$ is weakly increasing in each channel holding the other fixed. Therefore:
\[
\frac{\partial F^{\max}}{\partial p_{\mathrm{wf}}}\ge 0
\qquad\text{and}\qquad
\frac{\partial F^{\max}}{\partial F_0}\ge 0
\]
almost everywhere, with the only nondifferentiability at the knife-edge set where $p_{\mathrm{wf}}=F_0$. As a result, any architectural change that raises within-form success while not lowering overt success cannot reduce $F^{\max}$, and any safeguard bundle that weakly lowers both $p_{\mathrm{wf}}$ and $F_0$ weakly lowers $F^{\max}$ as well.

This is enough for the paper's directional claims.
Scale and codification remain weakly risk-increasing through the within-form channel when they raise $p_{\mathrm{wf}}$, safeguards remain weakly protective when they lower both channels, and the alignment-surface logic continues to identify a within-form vulnerability threshold independent of the exact aggregator.
What does change is the leverage of overt-only reforms: when $p_{\mathrm{wf}}>F_0$, marginal reductions in $F_0$ do not change $F^{\max}$ until overt risk again becomes the dominant channel.
The benchmark aggregator in the main text avoids this corner and is therefore better suited for smooth comparative statics, while $F^{\max}$ captures the strategic-substitution interpretation directly.

The benchmark aggregation \eqref{eq:failure_decomp_main} can also be relaxed in the complementary direction. In many political settings, within-form erosion can weaken enforcement institutions, which would make overt abuse more likely after some within-form success. A simple reduced-form way to represent this is to allow the overt-abuse component to depend on within-form intensity:
\[
F_0=F_0(x,s,r;\mu),
\qquad
\frac{\partial F_0}{\partial \mu}\ge 0,
\]
and to keep the total-failure form $F=F_0+(1-F_0)p_{\mathrm{wf}}$. Under $\partial F_0/\partial \mu\ge 0$, increases in scale or codification raise within-form intensity $\mu$ and thereby (weakly) increase overt vulnerability in addition to increasing $p_{\mathrm{wf}}$. Therefore the separable benchmark used in the main text is conservative for the comparative statics emphasized there: allowing complementarity strengthens, rather than weakens, the cautionary scale and codification implications.

\section{Robustness: Nonlinear Within-Form Intensity}
\label{app:nonlinear_mu}
Several results in the main text are not specific to the linear index $\mu(x,s,r)=\mu_0(r)+\eta(r)xs$. To see this, replace \eqref{eq:mu_def} with
\[
\mu(x,s,r)=\mu_0(r)+\eta(r)\,\phi(x,s),
\]
where $\phi(x,s)\ge 0$ is weakly increasing in both $x$ and $s$ (e.g., $\phi(x,s)=x^\alpha s^\beta$ with $\alpha,\beta>0$). Under the Poisson benchmark $p_{\mathrm{wf}}=1-\exp(-\mu)$, the within-form level set $\{(x,s,r):p_{\mathrm{wf}}(x,s;r)\le \bar p\}$ is still characterized by a single-index inequality:
\[
\mu_0(r)+\eta(r)\,\phi(x,s)\le -\ln(1-\bar p).
\]
Along the binding codification regime $s=S(x)$, the threshold-crossing logic requires only that $\phi(x,S(x))$ be increasing in $x$, so that increasing modernization pressure pushes the system monotonically toward higher within-form vulnerability. The linear specification $\phi(x,s)=xs$ is used in the main text for transparency and for the closed-form one-crossing codification flip.

The same logic extends beyond the Poisson benchmark. Suppose within-form success takes the form $p_{\mathrm{wf}}(x,s;r)=H(\mu(x,s,r))$ where $H$ is weakly increasing, and suppose the within-form intensity index takes the form $\mu(x,s,r)=\mu_0(r)+\eta(r)\phi(x,s)$ where $\phi$ is weakly increasing in both $x$ and $s$. For any target threshold $\bar p\in(0,1)$, define the corresponding intensity cutoff $\tau(\bar p)$ as the smallest $\mu$ such that $H(\mu)\ge \bar p$. Then $p_{\mathrm{wf}}(x,s;r)\le \bar p$ if and only if $\mu(x,s,r)\le \tau(\bar p)$. Under the Poisson benchmark $H(\mu)=1-\exp(-\mu)$, $\tau(\bar p)=-\ln(1-\bar p)$; under the $k$-move variant in Appendix \ref{app:wf_microfoundation}, the relevant cutoff is $\tau_k(\bar p)$.

Proposition \ref{prop:alignment_surface_scale} therefore does not depend on Poisson arrivals or on the linear $xs$ index.
What matters is that within-form success is governed by a monotone intensity index and that the standardized component of that index increases with adoption.
Appendix Figure \ref{fig:appendix_search_surface_threshold} provides a geometric illustration of the threshold condition for when the alignment surface becomes exploitable.

\section{Illustration: Endogenous Scale and Safeguards}
\label{app:illustration}
This final section is just an illustration rather than a core proof. 
It gives a simple joint-choice illustration in which the designer endogenously chooses scale and safeguards, with modernization pressure pushing scale upward.
The aim is to show that the main threshold logic can emerge from an explicit choice problem.
The illustration is static and reduced-form: it treats $(x,r)$ as jointly chosen under a common objective rather than as a fully specified sequential game.

For tractability, we collapse safeguards to a scalar index $r\in\mathbb{R}_+$ that shifts $\eta(r)$ and enters the institutional cost $B(r)$. Fix codification at $s=1$. Consider an objective of the form
\[
U(x,r)=\lambda x-\frac{1}{2}x^2-\chi r x-\delta\Big(\alpha x-\gamma r+\eta(r)x\Big)-B(r),
\]
where $\lambda>0$ is modernization pressure, $\chi\ge 0$ captures the idea that safeguarding a larger system creates capacity frictions, $\delta\in(0,1)$ is turnover risk, and $\alpha,\gamma>0$ summarize how scale and safeguards enter a local failure-loss index. The term $\eta(r)x$ captures the within-form vulnerability index from the main text, and $B(r)$ is a convex institutional cost of safeguards. This specification can be motivated as a local approximation when the within-form success probability is not yet near one, so that $p_{\mathrm{wf}}(x;r)=1-\exp(-\eta(r)x)$ is approximately $\eta(r)x$.

For fixed safeguards $r$, the first-order condition for an interior scale choice is
\[
\frac{\partial U}{\partial x}
=
\lambda-x-\chi r-\delta\alpha-\delta\eta(r)=0,
\]
so the interior scale candidate is
\[
x^{u}(r)=\lambda-\chi r-\delta\alpha-\delta\eta(r).
\]
This expression makes the basic comparative static transparent: higher modernization pressure increases preferred scale one-for-one, while safeguards reduce preferred scale through both capacity frictions and reduced within-form vulnerability.

For an interior safeguard choice, the first-order condition is
\[
\frac{\partial U}{\partial r}
=
-\chi x+\delta\gamma-\delta\eta'(r)x-B'(r)=0,
\]
or equivalently,
\[
B'(r)+\chi x=\delta\gamma+\delta x\big(-\eta'(r)\big).
\]
The right-hand side is the marginal benefit of safeguards, which rises with scale because the within-form channel scales with $x$. The left-hand side is the marginal cost, including both institutional costs and capacity frictions.

If safeguards are slow-moving relative to adoption choices, then over the relevant horizon $r$ may adjust little even as modernization pressure increases. Once safeguards are approximately pinned, further increases in $\lambda$ raise $x$ and therefore raise the standardized vulnerability term $\eta(r)x$. This reproduces the threshold-crossing logic from the main text in an explicit choice problem: scale continues to expand with modernization pressure, while safeguards do not strengthen commensurately, so within-form vulnerability rises along the path.

A convenient parameterization for numerical examples is to let within-form intensity decay exponentially with safeguards,
\[
\eta(r)=\bar\eta e^{-\rho r},
\qquad \bar\eta>0,\;\rho>0,
\]
and to let safeguard costs be quadratic, $B(r)=\tfrac12 k r^2$ with $k>0$. 
These forms preserve $\eta'(r)<0$ and convex costs, and they make it easy to visualize the exploitability threshold \eqref{eq:alignment_surface_condition} and the critical scale \eqref{eq:xcrit_scale} in a scalar-safeguard version of the choice problem above.

\end{document}